\documentclass[lettersize,journal]{IEEEtran}
\usepackage{hyperref}
\usepackage{amsmath,amsfonts}
\usepackage{algorithm}
\usepackage{array}
\usepackage[caption=false,font=normalsize,labelfont=sf,textfont=sf]{subfig}
\usepackage{textcomp}
\usepackage{stfloats}
\usepackage{url}
\usepackage{verbatim}
\usepackage{graphicx}
\usepackage{cite}
\usepackage[table,xcdraw]{xcolor}

\usepackage{booktabs}
\usepackage{amssymb}
\usepackage{mathtools}
\usepackage{algorithm}
\usepackage{algpseudocode}
\usepackage{float}
\usepackage{multirow}

\usepackage{amsthm}
\theoremstyle{plain}
\newtheorem{assumption}{Assumption}
\newtheorem{proposition}{Proposition}
\newtheorem{remark}{Remark}
\newtheorem{definition}{Definition}
\newtheorem{theorem}{Theorem}

\hyphenation{op-tical net-works semi-conduc-tor IEEE-Xplore}
\usepackage[normalem]{ulem}
\useunder{\uline}{\ul}{}


\usepackage{amsmath,amsfonts,bm}
\usepackage{xspace}









\def\eqref#1{equation~\ref{#1}}









\def\1{\bm{1}}

\def\eps{{\epsilon}}










\DeclareMathAlphabet{\mathsfit}{\encodingdefault}{\sfdefault}{m}{sl}
\SetMathAlphabet{\mathsfit}{bold}{\encodingdefault}{\sfdefault}{bx}{n}


\def\gO{{\mathcal{O}}}

\def\gU{{\mathcal{U}}}












\DeclareMathOperator*{\argmin}{arg\,min}


\newcommand{\x}{{\boldsymbol x}}
\newcommand{\kb}{{\boldsymbol k}}
\newcommand{\s}{{\boldsymbol s}}
\newcommand{\y}{{\boldsymbol y}}
\newcommand{\z}{{\boldsymbol z}}
\newcommand{\n}{{\boldsymbol n}}
\newcommand{\m}{{\boldsymbol m}}
\newcommand{\vbb}{{\boldsymbol v}}

\newcommand{\w}{{\boldsymbol w}}

\newcommand{\Ib}{{\boldsymbol I}}

\newcommand{\g}{{\boldsymbol g}}

\newcommand{\Rd}{{\mathbb R}}

\newcommand{\Ed}{{\mathbb E}}

\newcommand{\Ac}{{\mathcal A}}

\newcommand{\Nc}{{\mathcal N}}

%


\definecolor{C0}{rgb}{0.121569, 0.466667, 0.705882}
\definecolor{C1}{rgb}{1.000000, 0.498039, 0.054902}
\definecolor{C2}{rgb}{0.172549, 0.627451, 0.172549}
\definecolor{C3}{rgb}{0.839216, 0.152941, 0.156863}
\definecolor{C4}{rgb}{0.580392, 0.403922, 0.741176}
\definecolor{C5}{rgb}{0.549020, 0.337255, 0.294118}
\definecolor{C6}{rgb}{0.890196, 0.466667, 0.760784}
\definecolor{C7}{rgb}{0.498039, 0.498039, 0.498039}
\definecolor{C8}{rgb}{0.737255, 0.741176, 0.133333}
\definecolor{C9}{rgb}{0.090196, 0.745098, 0.811765}

\definecolor{myPurple}{rgb}{0.4, .0, .8}
\definecolor{myGreen}{rgb}{0, .8, .3}
\definecolor{myRed}{rgb}{0.8, .2, .2}
\definecolor{myOrange}{rgb}{0.7, 0.45, 0.2}
\definecolor{myBlue}{rgb}{.0, .0, 1.0}
\definecolor{myBlue2}{rgb}{.0, .0, 0.5}
\definecolor{myBlack}{rgb}{.0, .0, 0.0}

\begin{document}

\title{Fast and Stable Diffusion Inverse Solver with History Gradient Update}

\author{Linchao He, Hongyu Yan, Mengting Luo, Hongjie Wu, Kunming Luo, Wang Wang, Wenchao Du, Hu Chen~\IEEEmembership{Member,~IEEE,}, Hongyu Yang, Yi Zhang~\IEEEmembership{Senior Member,~IEEE,}, Jiancheng Lv~\IEEEmembership{Senior Member,~IEEE,}
\thanks{Corresponding author: Hu Chen e-mail: huchen@scu.edu.cn). Linchao He and Hongyu Yan have equal contribution.}
\thanks{Linchao He and Mengting Luo are with the Department of National Key Laboratory of Fundamental Science on Synthetic Vision, Sichuan University, Chengdu 610065, China. (e-mail: hlc@stu.scu.edu.cn).}
\thanks{Yi Zhang is with the School of Cyber Science and Engineering, Sichuan University, Chengdu 610065, China (e-mail: yzhang@scu.edu.cn).}
\thanks{Linchao He, Mengting Luo, Hongjie Wu, Wenchao Du, Hu Chen, Jiancheng Lv, and Hongyu Yang are with the College of Computer Science, Sichuan University, Chengdu 610065, China. (e-mail:  \{hlc, lmt\}@stu.scu.edu.cn; \{wuhongjie0818, wenchaodu.scu\}@gmail.com; \{huchen, lvjiancheng\}@scu.edu.cn; yanghongyu\_scu@163.com).}
\thanks{Hongyu Yan and Kunming Luo are with the Hong Kong University of Science and Technology, Hong Kong, China. (e-mail: \{hyanar, kluoad\}@connect.ust.hk)}
\thanks{Wang Wang is with the Institute of Space and Earth Information Science, The Chinese University of Hong Kong, Shatin, N.T., Hong Kong, China. (e-mail: wangw00821@gmail.com)}}

\markboth{Journal of \LaTeX\ Class Files,~Vol.~14, No.~8, August~2021}%
{Shell \MakeLowercase{\textit{et al.}}: A Sample Article Using IEEEtran.cls for IEEE Journals}


\maketitle

\begin{abstract}
Diffusion models have recently been recognised as efficient inverse problem solvers due to their ability to produce high-quality reconstruction results without relying on pairwise data training. Existing diffusion-based solvers utilize Gradient Descent strategy to get a optimal sample solution.
However, these solvers only calculate the current gradient and have not utilized any history information of sampling process, thus resulting in unstable optimization progresses and suboptimal solutions. To address this issue, we propose to utilize the history information of the diffusion-based inverse solvers. In this paper, we first prove that, in previous work, using the gradient descent method to optimize the data fidelity term is convergent. Building on this, we introduce the incorporation of historical gradients into this optimization process, termed History Gradient Update (HGU). We also provide theoretical evidence that HGU ensures the convergence of the entire algorithm. It's worth noting that HGU is applicable to both pixel-based and latent-based diffusion model solvers. Experimental results demonstrate that, compared to previous sampling algorithms, sampling algorithms with HGU achieves state-of-the-art results in medical image reconstruction, surpassing even supervised learning methods. Additionally, it achieves competitive results on natural images.
\end{abstract}

\begin{IEEEkeywords}
Diffusion model, inverse problem, CT reconstruction.
\end{IEEEkeywords}

\begin{figure*}[t]
	\centering
	\includegraphics[width=1\linewidth]{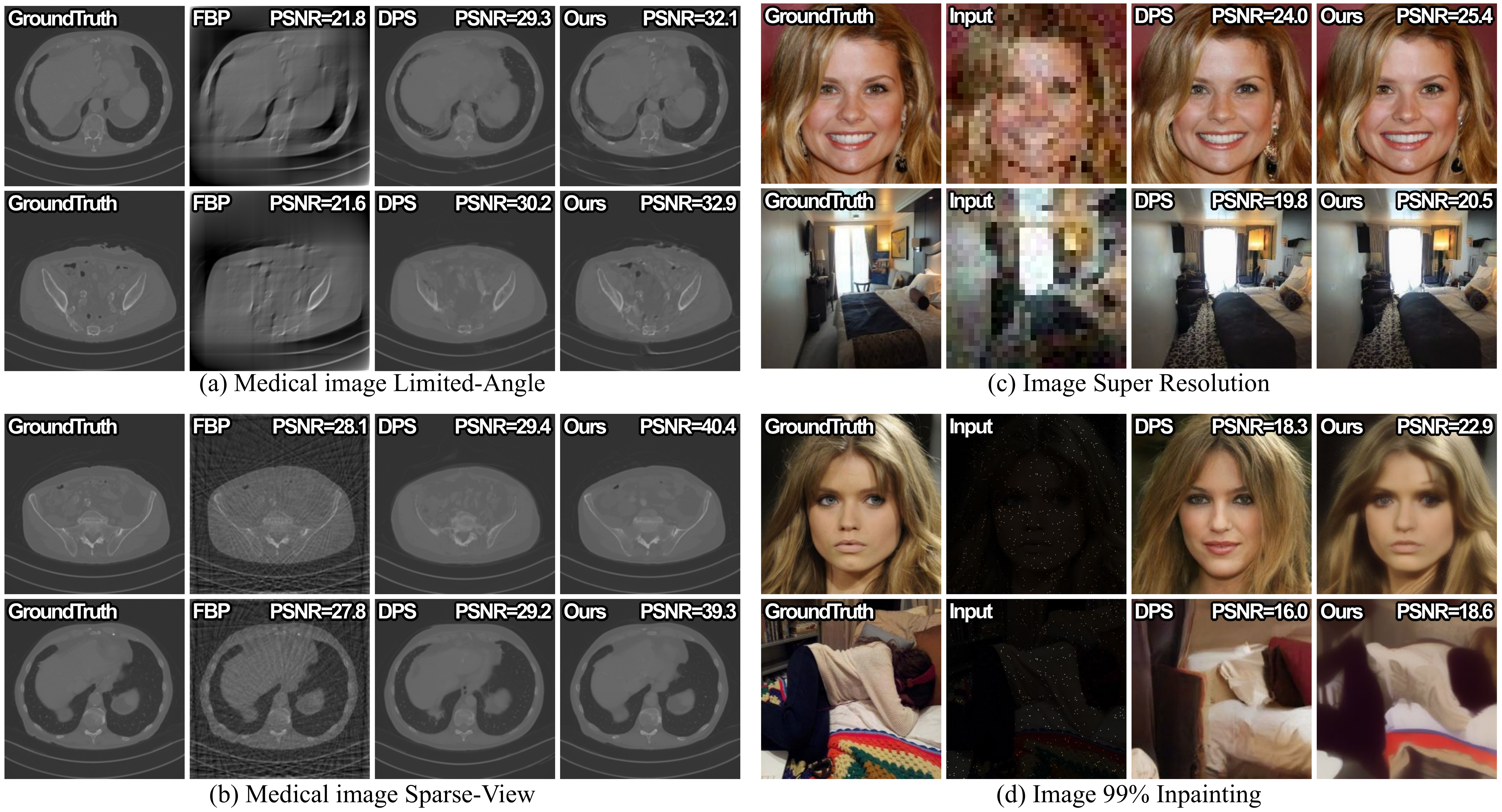}
	\caption{Our method is capable of reconstructing both nature images and medical data in a zero-shot manner with stable and fast optimization processes. In this paper, we demonstrate the reconstruction ability of our method on various datasets and measurements. FBP is filtered backprojection.}
	\label{fig:coverfig}
\end{figure*}

\section{Introduction}
Inverse problem solving is of paramount significance, as it typically entails the recovery of missing data from sparse measurement $\y$ which is usually formulated as:
\begin{align}
	\hat{\x} = \argmin_\x \left|\left|\Ac \x - \y\right|\right|_2^2 + \lambda R\left(\x\right),
	\label{eq:ir_formualtion}
\end{align}
where $\left|\left|\Ac \x - \y\right|\right|_2^2$ is the data-fidelity term that ensures the reconstructed results are consistent with the measurements, while $\lambda R\left(\x\right)$ is a prior term that ensures the reconstructed results are realistic and follow the ground truth image distribution $p\left(\x\right)$. Due to the sparsity of $\y$, the solving processes are well-known ill-posed problems. Therefore, it is important to take advantage of good prior models to generate high-quality results. Commonly, pre-trained generative models are used as the prior models, including GANs~\cite{bora2017compressed,goodfellow2014generative}, VAEs~\cite{bora2017compressed,kingma2013auto}, and diffusion models~\cite{ho2020denoising,chung2022improving,song2021solving}.

Previous research on optimization problems in diffusion-based inverse problem solvers, specifically with regards to optimizing the data fidelity term, has traditionally utilized the most basic Gradient Descent (GD) strategy~\cite{chung2022improving,chung2022solving,chung2022diffusion,Song2023PseudoinverseGuidedDM} to obtain the generated result. However, the loss-guided optimization process of the data fidelity term is typically a stochastic process that is naturally unstable. Additionally, the GD strategy only relies on the gradient in a single step to optimize the target. These facts make the current diffusion-based solvers difficult to optimize and ultimately lead to inaccurate samples. One solution to this issue is to incorporate historical gradient information into the optimization processes, which can make the processes more stable and produce high-quality samples. However, current research has yet to explore the historical gradient information, and no studies have provided evidence of the convergence properties of the GD strategy when optimizing the data fidelity term.


Pixel-based diffusion models have been widely used to solve inverse problems in recent research~\cite{lugmayr2022repaint,song2021solving,chung2022diffusion,kawar2022denoising,chung2022improving,wang2022zero,Song2023PseudoinverseGuidedDM}. These models can effectively act as prior models and generate high-quality samples in an iterative optimization framework. However, they have a significant limitation: they can learn accurate prior information from large-scale data, but inaccurate from small-scale data. This limits their efficiency and applicability to data-constrained tasks, such as medical image inverse problems. To the best of our knowledge, no previous work has investigated the applicability of diffusion-based solvers on small-scale datasets.


In this paper, we aim to address the above issues by introducing a new optimization method, which greatly expands the applicability of diffusion-based solvers. Specifically, \textbf{(1)} to address the optimization problems within diffusion-based solvers, we prove the evidence of the convergence properties of the GD strategy. \textbf{(2)} Based on this evidence, we develop a new strategy for the optimization of data fidelity term which use the historical gradient from previous optimization steps to adaptively adjust sample-level gradient, namely History Gradient Update (HGU), thereby stabilizing the whole optimization process and improving the quality of final samples. \textbf{(3)} We introduce the latent diffusion model (LDM) to the diffusion-based solver, such that we can learn accurate prior information from small-scale data.


Extensive experiments of various inverse problems demonstrate that HGU with LDMs outperforms state-of-the-art supervised and unsupervised methods on small-scale medical datasets. Additionally, as a general method of inverse problem solvers, we extend HGU to the natural image restoration task and HGU achieves competitive performance compared to other state-of-the-art zero-shot methods. HGU provides a valuable optimization tool for solving the inverse problem, allowing the community to solve the inverse problem for different modalities without regrading the scale of the dataset and leveraging the vast amount of available LDMs such as Stable Diffusion. Fig.~\ref{fig:coverfig} shows some representative visual results of the proposed method.

\section{Background}

\subsection{Diffusion models}
Consider a Gaussian diffusion process, where $\x_t \in \Rd^n, t\in[0,T]$ and initial $\x_0$ is sampled from the original data distribution $P_\text{data}$. We define the forward diffusion process using stochastic differential equation (SDE)~\cite{song2021scorebased}:
\begin{align}
	d\x = f\left(\x,t\right) \, dt + g\left(t\right) \, d\w,
	\label{eq:forward_sde}
\end{align}
where $f\left(\cdot,t\right): \, \mathbb{R}^n \to \mathbb{R}^n > 0$ is a drift coefficient function, $g\left(t\right) \in \mathbb{R}$ is defined as a diffusion coefficient function, and $\w \in \mathbb{R}^n$ is a standard $n$-dimensional Brownian motion. Thus, the reverse SDE of Eq.~\eqref{eq:forward_sde} can also defined as:
\begin{align}
	d\x = \left[f\left(\x_t,t\right)-g\left(t\right)^2\nabla_{\x_t} \log p_t\left(\x_t\right)\right] \, dt + g\left(t\right) \, d\bar{\w},
	\label{eq:reverse_sde}
\end{align}
where $dt$ is a negative infinitesimal time step and $d\bar{\w}$ is the backward process of $d\w$. The reverse SDE defines a generative process that transforms standard Gaussian noise into meaningful content. To accomplish this transformation, the score function $\nabla_{\x_t} \log p_t\left(\x_t\right)$ needs to be matching, which is typically replaced with $\nabla_{\x_t} \log p_{0 | t}\left(\x_t \middle| \x_0\right)$ in practice. Therefore, we can train a score model $\s_\theta\left(\x_t,t\right)$, so that $\s_\theta\left(\x_t,t\right) \approx \nabla_{\x_t} \log p_t\left(\x_t\right) \approx \nabla_{\x_t} \log p_{0 | t}\left(\x_t \middle| \x_0\right)$ using the following score-matching objective:
\begin{multline}
	\label{eq:sde_objective}
	\min_\theta \Ed_{t \in \left[1,\dots,T\right], \x_0 \sim P_\text{data}, \x_t \sim p_{0|t}\left(\x_t \middle| \x_0 \right)} \\ \left[ \left|\left| \s_\theta\left(\x_t,t\right)- \nabla_{\x_t} \log p_{0|t}\left(\x_t \middle| \x_0 \right) \right|\right| _2^2\right] \, .
\end{multline}
Subsequently, the reverse SDE can yield meaningful contents $\x_0 \sim P_\text{data}$ from random noises $\x_{T} \sim \Nc(\bm{0}, \Ib)$ by iteratively using $\s_\theta \left(\x_t,t\right)$ to estimate the scores $\nabla_{\x_t} \log p_t\left(\x_t\right)$. In our experiments, we adopt the standard Denoising Diffusion Probabilistic Models (DDPM)~\cite{ho2020denoising} which is equivalent to the above variance preserving SDE (VP-SDE)~\cite{song2021scorebased}.

\subsection{Diffusion model for inverse problem solving}
To solve the inverse problems using the diffusion model, various workarounds are proposed~\cite{rombach2022high,saharia2022image,gao2023implicit,luo2023image}. These methods use conditional diffusion models to iteratively denoise Gaussian noise and obtain reconstructed samples. However, these approaches have limitations, as they rely on conditional diffusion models that require paired data for training and can only handle specific tasks without retraining. To address these issues, several zero-shot diffusion-based inverse solvers~\cite{lugmayr2022repaint,song2021solving,kawar2022denoising,chung2022improving,wang2022zero} have been proposed. Typically, for each denoising step, they~\cite{lugmayr2022repaint,song2021solving,wang2022zero} unconditionally estimate new denoised samples based on the previous step, followed by replacing the corresponding items in the denoised samples using the measurement $\y$, which is also known as range-null space decomposition~\cite{wang2022zero}. This approach ensures data consistency, but it fails in the case of noisy measurements, since $\Ac^{-1} \y$ is not a correct corresponding item for the denoised samples. To deal with noisy measurements, alternative methods~\cite{chung2022diffusion,chung2022parallel} have been proposed to solve the inverse problems with noised measurements. Rather than directly replace items, these approaches use the gradient of $\left|\left|\y-\Ac \x \right|\right|_2^2$ to conditionally guide the generative process. These methods are robust to noise and can process nonlinear projection operators, which can be formulated as
\begin{align}
	& \nabla_{\x_t} \log p(\y|\x_t) \simeq \nabla_{\x_t} \log p\left(\y|\Ed \left[\x_0|\x_t\right] \right), \nonumber \\
	& \Ed \left[\x_0|\x_t\right] = \frac{1}{\sqrt{\bar{\alpha}}} \left(\x_t-(1-\bar{\alpha})\s_\theta(\x_t,t)\right).
	\label{eq:dps}
\end{align}

Despite the achievements, these methods try to solve the inverse problem on the pixel space which are very effective when having large-scale training dataset for DDPM models, while obtaining bad performance on small-scale datasets which limits their availability in solving the inverse problem of medical imaging.

\section{Method}

\subsection{Latent diffusion solver}


Previous works on diffusion-based inverse problem solving, such as~\cite{chung2022diffusion,chung2022improving,chung2022parallel,chung2022solving,song2021solving,wang2022zero}, have solved the problems in the pixel space, which is sensitive to the scale of data. To overcome this limitation, we are inspired by the Latent Diffusion Models (LDMs)~\cite{rombach2022high}. We introduce a new inverse solver called Latent Diffusion Solver (LDS) that aims to solve general inverse problems on the latent space for small-scale datasets.
\begin{assumption}\label{autoencoder}
For paired encoder $\mathcal{E}$ and decoder $\mathcal{D}$, $\mathcal{E}$ can compress any data $\x \sim p_\text{data}$ into a distinct low-dimensional latent $\z \sim p_\text{latent}$, and subsequently $\mathcal{D}$ can restore $\z$ from $\x$.
\begin{equation}
\z = \mathcal{E}(\x), \x = \mathcal{D}(\z), \; \; \; x \sim p_\text{data}
\end{equation}	
\end{assumption}
Although the above Assumption~\ref{autoencoder} is a strong assumption, in real-world applications, variational autoencoders~\cite{kingma2013auto,van2017neural} have revealed their strong performance in compressing and restoring. When both the encoder and decoder are well-trained, we posit that the aforementioned assumptions hold. So that we can obtain a score model on latent space denoted as $\epsilon^\z_\theta$ as shown in Proposition~\ref{scoremodel}.
\begin{proposition}\label{scoremodel}
	Considering a data $\x \sim p_\text{data}$, we can get its unique latent $\z$ by $\z = \mathcal{E}(\x)$. By minimizing the below score matching function, we can get $\epsilon^\z_\theta$:
	\begin{multline}
		\min_\theta \Ed_{t \in \left[1,\dots,T\right], \z_0 \sim p_\text{latent}, \z_t \sim p_{0|t}\left(\z_t \middle| \z_0 \right)} \\ \left[ \left|\left| \s^\z_\theta\left(\z_t,t\right)- \nabla_{\z_t} \log p_{0|t}\left(\z_t \middle| \z_0 \right) \right|\right| _2^2\right] \, , \z = \mathcal{E}(\x).
	\end{multline}
\end{proposition}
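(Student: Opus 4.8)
The statement is essentially a transfer of the standard (denoising) score-matching construction from the pixel space to the latent space, so the plan is to (i) make precise the latent distribution $p_\text{latent}$ on which the objective is defined, (ii) invoke the equivalence between denoising score matching and explicit score matching on that space, and (iii) rewrite the resulting score network in the DDPM noise-prediction parametrisation to obtain $\epsilon^\z_\theta$. First I would define $p_\text{latent}$ as the push-forward $\mathcal{E}_\# p_\text{data}$; Assumption~\ref{autoencoder} guarantees that $\mathcal{E}$ restricted to the support of $p_\text{data}$ is injective with left inverse $\mathcal{D}$, so this push-forward is well-defined and carries exactly the same information as $p_\text{data}$ (in particular, drawing $\z_0 \sim p_\text{latent}$ coincides with drawing $\x \sim p_\text{data}$ and setting $\z_0 = \mathcal{E}(\x)$, which is the meaning of the ``$\z = \mathcal{E}(\x)$'' attached to the objective).

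Next, I would run the \emph{same} forward diffusion as in \eqref{eq:forward_sde} (equivalently, the DDPM/VP-SDE forward chain) but with $\z_0$ in place of $\x_0$. Since the perturbation kernel is Gaussian, $p_{0|t}(\z_t \mid \z_0) = \Nc\!\left(\sqrt{\bar{\alpha}_t}\,\z_0, (1-\bar{\alpha}_t)\Ib\right)$ and its score is explicit, $\nabla_{\z_t}\log p_{0|t}(\z_t \mid \z_0) = -(\z_t - \sqrt{\bar{\alpha}_t}\,\z_0)/(1-\bar{\alpha}_t)$. The key step is then the classical denoising-score-matching identity~\cite{song2021scorebased}: for every $t > 0$ the functional in Proposition~\ref{scoremodel} differs from $\Ed_{\z_t \sim p_t}\big[\|\s^\z_\theta(\z_t,t) - \nabla_{\z_t}\log p_t(\z_t)\|_2^2\big]$ by a constant independent of $\theta$, where $p_t$ is the marginal law of $\z_t$. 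Hence, over a sufficiently expressive family, the minimiser satisfies $\s^\z_\theta(\z_t,t) = \nabla_{\z_t}\log p_t(\z_t) = \Ed_{\z_0 \mid \z_t}[\nabla_{\z_t}\log p_{0|t}(\z_t \mid \z_0)]$ for a.e.\ $\z_t$ and all $t > 0$; this is the latent score model. Finally, I would pass to the noise-prediction reparametrisation used throughout the paper (cf.\ \eqref{eq:dps}) by setting $\epsilon^\z_\theta(\z_t,t) := -\sqrt{1-\bar{\alpha}_t}\,\s^\z_\theta(\z_t,t)$, which is the standard DDPM change of variables and yields the claimed $\epsilon^\z_\theta$.

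The main obstacle is the regularity of $p_\text{latent}$: because $\mathcal{E}$ typically maps onto a lower-dimensional set, $p_\text{latent}$ need not admit a density on the ambient latent space, so $\nabla_{\z_t}\log p_t$ is not literally defined at $t=0$. I would handle this exactly as in the pixel-space theory: for any $t > 0$ the Gaussian convolution in the forward process makes $p_t$ smooth and strictly positive, so $\nabla_{\z_t}\log p_t$ exists and is finite, and the denoising-score-matching identity above is valid; one only needs $p_\text{latent}$ to be a genuine probability measure (automatic for a push-forward) for the constant separating the two functionals to be finite. A secondary, more conceptual point is that Assumption~\ref{autoencoder} is only posited on $\mathrm{supp}(p_\text{data})$; I would note this suffices, since the objective only evaluates $\z_0$ at encodings of real data, and the learned $\epsilon^\z_\theta$ is subsequently used inside the reverse process whose fixed point is $p_\text{latent}$ — decoding then returns to $p_\text{data}$ by the assumption.
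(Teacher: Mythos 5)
Your proposal is correct and matches the paper's (implicit) treatment: the paper states Proposition~\ref{scoremodel} without a dedicated proof, presenting it as the direct transfer of the pixel-space objective \eqref{eq:sde_objective} to the latent space once Assumption~\ref{autoencoder} identifies $p_\text{latent}$ with the push-forward of $p_\text{data}$ under $\mathcal{E}$. The details you add --- the denoising-score-matching equivalence for $t>0$, the Gaussian smoothing that makes $\nabla_{\z_t}\log p_t$ well defined even though $p_\text{latent}$ may be degenerate, and the noise-prediction reparametrisation yielding $\epsilon^\z_\theta$ --- are exactly the standard justification the paper leaves implicit, so nothing is missing.
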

Also, given the well-trained decoder $\mathcal{D}$, we can have a new data consistency term on the latent space as shown in the following Proposition.
\begin{proposition}	\label{dataconsistency}
	Considering a latent $\z_t \sim p_\text{latent}(z_i|z_0)$ where $i \in [0, 1]$ and $\z_0 \sim p_\text{latent}$, the data consistency term on the latent space can be computed by 
	\begin{align}
		& p(\y| \z_i) = \gU (\y, \Ac \mathcal{D}\left(\z_i\right)).
	\end{align}
\end{proposition}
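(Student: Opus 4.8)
The plan is to transport the pixel-space likelihood model to the latent space by exploiting the exact invertibility of the autoencoder asserted in Assumption~\ref{autoencoder}. First I would recall that the data-fidelity term $\left|\left|\Ac\x-\y\right|\right|_2^2$ appearing in Eq.~\eqref{eq:ir_formualtion} corresponds to a measurement model of the form $\y=\Ac\x+\n$ with $\n$ drawn from a fixed noise distribution, so that the pixel-space likelihood factors as $p(\y\mid\x)=\gU(\y,\Ac\x)$, where $\gU(\y,\cdot)$ denotes the noise density evaluated at its second argument. This object lives entirely in pixel space; everything that follows is a change of variables.

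Next I would invoke Assumption~\ref{autoencoder}: since $\mathcal{E}$ and $\mathcal{D}$ are mutually inverse on the supports of $p_\text{data}$ and $p_\text{latent}$, the map $\z_0\mapsto\x_0=\mathcal{D}(\z_0)$ is a bijection, and the pair $(\z_0,\x_0)$ is a deterministic one-to-one relabelling. Consequently $\y$ is conditionally independent of $\z_0$ given $\x_0$ (indeed $\x_0$ is a measurable function of $\z_0$), so
\[
p(\y\mid\z_0)=p\bigl(\y\mid\x_0=\mathcal{D}(\z_0)\bigr)=\gU\bigl(\y,\Ac\mathcal{D}(\z_0)\bigr).
\]
To reach the stated form for a \emph{noised} latent $\z_i$ with $i\in[0,1]$, I would then reuse the posterior-mean (Tweedie) approximation already employed in Eq.~\eqref{eq:dps}, namely $p(\y\mid\z_i)\simeq p(\y\mid\hat\z_0(\z_i))$, where $\hat\z_0(\z_i)$ is the one-step denoised estimate obtained from the latent score model $\s^\z_\theta$ of Proposition~\ref{scoremodel}; decoding this estimate with $\mathcal{D}$ and applying the clean-latent identity above yields $p(\y\mid\z_i)=\gU(\y,\Ac\mathcal{D}(\z_i))$, with $\mathcal{D}(\z_i)$ read as shorthand for $\mathcal{D}(\hat\z_0(\z_i))$ (or directly as $\mathcal{D}(\z_i)$ when the raw one-step decode is used in the implementation).

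The main obstacle is precisely this last step: the decoder $\mathcal{D}$ is faithful only on the clean latent manifold $p_\text{latent}$, so for a noised $\z_i$ the quantity $\mathcal{D}(\z_i)$ is not literally the decode of a data point, and the equality must be understood as an approximation of the true $p(\y\mid\z_i)=\int p\bigl(\y\mid\mathcal{D}(\z_0)\bigr)\,p(\z_0\mid\z_i)\,d\z_0$ by evaluation at the conditional mean — exactly the approximation already accepted in the pixel case in Eq.~\eqref{eq:dps}. I would therefore present the argument in two clearly separated pieces: (i) an \emph{exact} identity $p(\y\mid\z_0)=\gU(\y,\Ac\mathcal{D}(\z_0))$ that follows purely from Assumption~\ref{autoencoder} together with the measurement model, and (ii) the \emph{approximate} extension to $\z_i$ that mirrors Eq.~\eqref{eq:dps}, so the scope of each claim stays transparent. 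A secondary point I would make explicit rather than leave implicit is the measure-theoretic one: transporting a conditional density through $\mathcal{D}$ requires $\mathcal{E}$ and $\mathcal{D}$ to be bi-measurable, which is part of the content of Assumption~\ref{autoencoder} and should be stated as such.
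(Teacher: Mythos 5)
Your proposal is correct and follows essentially the same route the paper intends: Proposition~\ref{dataconsistency} is treated there as an immediate change of variables through the decoder guaranteed by Assumption~\ref{autoencoder} (the paper gives no separate formal proof), and the noised-latent subtlety you isolate is handled exactly as you describe, by the posterior-mean substitution $\z_i \mapsto \Ed\left[\z_0|\z_t\right]$, which the paper simply defers to Theorem~\ref{eq:lds} rather than folding into the proposition. Your explicit separation of the exact clean-latent identity from the approximate extension to noised latents is a cleaner presentation of the same argument, not a different one.
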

Based on the above assumption and proposition, we can solve inverse problems on the latent space without losing any generalization ability. Similarly, we can use the posterior mean method from DPS~\cite{chung2022diffusion} to replace $\z_i$ with $\hat{\z}_0$ to derive LDS. The following theorem indicates this manner.
\begin{theorem}[Latent Diffusion Solver]\label{eq:lds}
Considering Assumption~\ref{autoencoder}, Proposition~\ref{scoremodel} and~\ref{dataconsistency}, we can derive Eq.~\ref{eq:dps} to the latent space. Formally,
	\begin{align}
		\nabla_{\z_t} \log p(\y|\z_t) & \simeq \nabla_{\z_t} \log p\left(\y|\Ed \left[\z_0|\z_t\right] \right) \nonumber \\
		& =\gU\left( \y, \Ac \mathcal{D}\left(\Ed \left[\z_0|\z_t\right]\right) \right), \nonumber \\
		\Ed \left[\z_0|\z_t\right] &= \frac{1}{\sqrt{\bar{\alpha}}} \left(\z_t-(1-\bar{\alpha})\s^\z_\theta(\z_t,t)\right), \\
		\nabla_{\z_t} \log p(\z_t|\y) & \simeq \s^\z_\theta(\z_t,t) - \epsilon \nabla_{\z_t} \gU \left( \y, \Ac \mathcal{D}\left(\Ed \left[\z_0|\z_t\right]\right) \right)
	\end{align}
where $\epsilon$ is the guidance rate, which can balance the realness and consistency of results, and $\gU$ is the evaluation function which evaluates the difference between the measurement $\y$ and predicted measurement $\Ac \mathcal{D}\left(\Ed \left[\z_0|\z_t\right]\right)$.
\end{theorem}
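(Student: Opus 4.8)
The plan is to mirror the pixel-space DPS derivation that produces Eq.~\eqref{eq:dps}, but carried out in the latent coordinates supplied by the encoder--decoder pair: Assumption~\ref{autoencoder} is used to transport every probabilistic object across the bijection $\mathcal{E}$/$\mathcal{D}$, while Propositions~\ref{scoremodel} and~\ref{dataconsistency} supply the two ingredients the derivation needs, namely an estimator of the latent score and an explicit latent measurement model. So the theorem is really a ``lifting'' statement, and the proof is structured to make each step of the pixel-space argument go through verbatim after the transfer.

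First I would set up the latent generative process. Since Assumption~\ref{autoencoder} makes $\mathcal{E}$ a bijection with inverse $\mathcal{D}$, the pushforward $\mathcal{E}_{\#}\,p_\text{data}$ is exactly $p_\text{latent}$, so running the forward VP-SDE on $\z_0=\mathcal{E}(\x_0)$ yields a family $\{\z_t\}$ whose marginals $p_t(\z_t)$ and transition kernels $p_{0\mid t}(\z_t\mid\z_0)$ carry the same Gaussian structure as in the pixel case. This is precisely the setting in which Proposition~\ref{scoremodel} yields $\s^\z_\theta(\z_t,t)\approx\nabla_{\z_t}\log p_t(\z_t)$, and in which Tweedie's formula applies unchanged, giving the stated posterior mean $\Ed[\z_0\mid\z_t]=\tfrac{1}{\sqrt{\bar\alpha}}\bigl(\z_t-(1-\bar\alpha)\s^\z_\theta(\z_t,t)\bigr)$.

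Next I would invoke Bayes' rule in latent space, $\nabla_{\z_t}\log p(\z_t\mid\y)=\nabla_{\z_t}\log p(\z_t)+\nabla_{\z_t}\log p(\y\mid\z_t)$, and treat the intractable likelihood term with the same device DPS uses: replace the noisy-latent likelihood by its value at the posterior mean, $\nabla_{\z_t}\log p(\y\mid\z_t)\simeq\nabla_{\z_t}\log p(\y\mid\Ed[\z_0\mid\z_t])$, which is the delta-method surrogate for $p(\y\mid\z_t)=\int p(\y\mid\z_0)\,p_{0\mid t}(\z_0\mid\z_t)\,d\z_0$ justified by concentration of $p_{0\mid t}(\z_0\mid\z_t)$ about $\Ed[\z_0\mid\z_t]$. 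Finally, Proposition~\ref{dataconsistency} gives the clean-latent measurement model $p(\y\mid\z_0)=\gU(\y,\Ac\mathcal{D}(\z_0))$, so substituting $\z_0\leftarrow\Ed[\z_0\mid\z_t]$ and differentiating through the composition $\z_t\mapsto\Ed[\z_0\mid\z_t]\mapsto\mathcal{D}(\cdot)\mapsto\Ac(\cdot)$ produces the guidance term $-\epsilon\,\nabla_{\z_t}\gU(\y,\Ac\mathcal{D}(\Ed[\z_0\mid\z_t]))$, and adding it to $\s^\z_\theta(\z_t,t)$ gives the claimed expression for $\nabla_{\z_t}\log p(\z_t\mid\y)$.

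The main obstacle is rigour in the likelihood-approximation step: exactly as in pixel-space DPS this is a heuristic rather than an identity, and here it is further strained because $\mathcal{D}$ is strongly nonlinear, so the surrogate $p(\y\mid\z_t)\approx p(\y\mid\Ed[\z_0\mid\z_t])$ should be presented at the same level of rigour as the prior work it extends rather than with a quantitative error bound. A secondary point I would state explicitly is that Assumption~\ref{autoencoder} is exactly what licenses treating $p_\text{latent}$ as a genuine diffusion target and $\mathcal{D}$ as the exact inverse of $\mathcal{E}$; without the bijectivity the pushforward identity $p_\text{latent}=\mathcal{E}_{\#}\,p_\text{data}$, and hence the clean transfer of Tweedie's formula to the latent marginals, would no longer hold.
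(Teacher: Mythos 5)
Your proposal follows essentially the same route the paper intends: the paper gives no separate formal proof of this theorem, deriving it exactly as you do by transporting the DPS posterior-mean (Tweedie) approximation to latent space via Assumption~\ref{autoencoder}, using the latent score model of Proposition~\ref{scoremodel} and the latent data-consistency term of Proposition~\ref{dataconsistency}, and differentiating through $\mathcal{D}$ and $\Ac$. Your explicit acknowledgement that the likelihood surrogate $p(\y\mid\z_t)\approx p(\y\mid\Ed[\z_0\mid\z_t])$ remains a heuristic, inherited from pixel-space DPS, is consistent with the paper's own level of rigour here.
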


\begin{figure*}[t]
	\centering
 \label{fig:figframework}
	\includegraphics[width=1\linewidth]{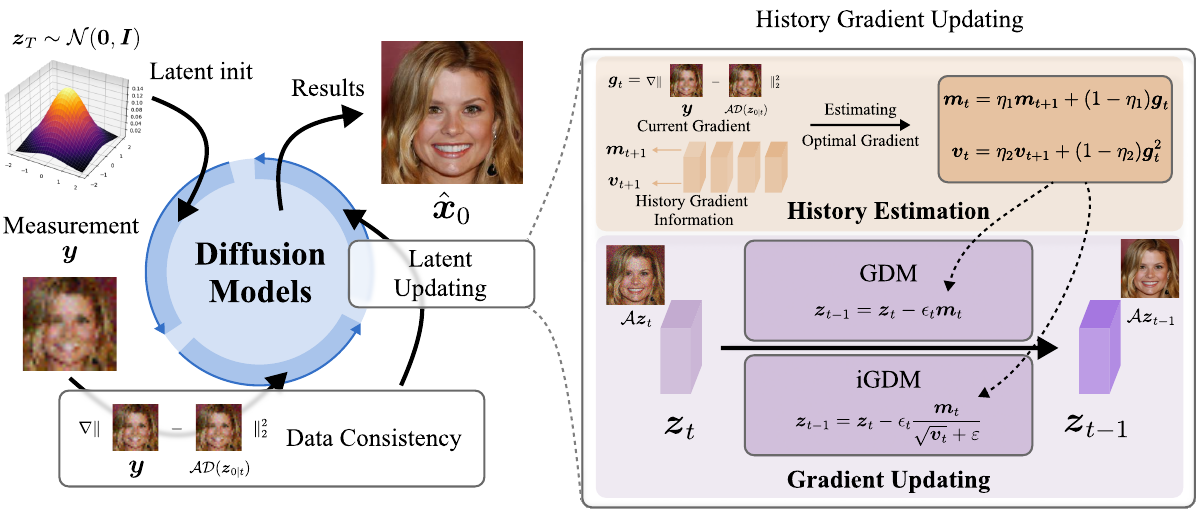}
    \caption{\textbf{Left: the proposed framework for inverse problem solving.}. We extend the previous work in the pixel-based diffusion models (~\cite{chung2022diffusion}) to the latent-based diffusion models, which are more efficient. We design a new approach to update the latent after every data consistency step, making the optimization process of latent more stable and the results more accuracy. \textbf{Right: an illustration of the latent updating process and the proposed History Gradient Updating.} The history gradients are essential to stabilize optimization processing and improve the quality of samples. We collect the history information from previous step, and estimate the optimal gradient factors $\{\m_t, \vbb_t\}$ based on the current gradient $\g_t$. Finally, we compute the optimal gradients based on the estimated momentum through our Momentum-variant HGU (GDM) and Improved-Momentum-variant HGU (iGDM) algorithms and obtain the next latent $\z_{t-1}$.}
\end{figure*}

\subsection{History gradient update}

\begin{figure*}
	\begin{minipage}[!h]{0.48\textwidth}
		\begin{algorithm}[H]
			\caption{HGU: Momentum-variant.}
			\label{algo:ldir_momentum}
			\begin{algorithmic}[1]
				\Require $T, y, \s_\theta^\z, \mathcal{D}, \eta$
				\State $\z_T \sim \Nc(\bm{0}, \Ib)$
				\For{$t = T-1, \cdots, 0$}
				\State $\z_{0|t} \gets \frac{1}{\sqrt{\bar{\alpha}_t}} \left(\z_t-(1-\bar{\alpha}_t)\s^\z_\theta(\z_t,t)\right)$
				\State $\kb \sim \Nc(\bm{0}, \Ib)$
				\State $\z_{t-1}' \gets \frac{\sqrt{\alpha_i}(1-\bar{\alpha}_{t-1})}{1-\bar{\alpha}_t}\z_t + \frac{\sqrt{\bar{\alpha}_{t-1}}\beta_t}{1-\bar{\alpha}_t} \z_{0|t} + g(t)\kb$
				\If{$t == T-1$}
				\State $\m_t \gets \nabla_{\z_t} \gU \left( \y, \Ac \mathcal{D}\left( \z_{0|t} \right) \right)$
				\Else
				\State $\m_t \gets \eta \m_{t+1} + (1-\eta) \nabla_{\z_t} \gU \left( \y, \Ac \mathcal{D}\left( \z_{0|t} \right) \right)$
				\EndIf
				\State $\z_{t-1} \gets \z_{t-1}' - \eps_t \m_t$ 
				\EndFor
				\State $\hat{\x}_0 \gets \mathcal{D}\left(\z_{0}\right)$
				\newline
				\Return $\hat{\x}_0$
			\end{algorithmic}
		\end{algorithm}
	\end{minipage}
	\hspace{0.04\textwidth}
	\begin{minipage}[!h]{0.48\textwidth}
		\begin{algorithm}[H]
			\caption{HGU: Improved-Momentum-variant.}
			\label{algo:ldir_adam}
			\begin{algorithmic}[1]
				\Require $T, y, \s_\theta^\z, \mathcal{D}, \eta_1, \eta_2, \varepsilon, \eps$
				\State $\z_T \sim \Nc(\bm{0}, \Ib), \m_{T} \gets 0, \vbb_{T} \gets 0$
				\For{$t = T-1, \cdots, 0$}
				\State $\z_{0|t} \gets \frac{1}{\sqrt{\bar{\alpha}_t}} \left(\z_t-(1-\bar{\alpha}_t)\s^\z_\theta(\z_t,t)\right)$
				\State $\kb \sim \Nc(\bm{0}, \Ib)$
				\State $\z_{t-1}' \gets \frac{\sqrt{\alpha_t}(1-\bar{\alpha}_{t-1})}{1-\bar{\alpha}_t}\z_t + \frac{\sqrt{\bar{\alpha}_{t-1}}\beta_t}{1-\bar{\alpha}_t} \z_{0|t} + g(t)\kb$
				\State $\g_t \gets \nabla_{\z_t} \gU \left( \y, \Ac \mathcal{D}\left( \z_{0|t} \right) \right)$
				\State $\m_t \gets \eta_1 \m_{t+1} + (1-\eta_1) \g_t$
				\State $\vbb_t \gets \eta_2 \vbb_{t+1} + (1 - \eta_2) \g_t^2$
				\State $\z_{t-1} \gets \z_{t-1}' - \eps \frac{\m_t}{\sqrt{\vbb_t} + \varepsilon}$ 
				\EndFor
				\State $\hat{\x}_0 \gets \mathcal{D}\left(\z_0\right)$
				\newline
				\Return $\hat{\x}_0$
			\end{algorithmic}
		\end{algorithm}
	\end{minipage}
\end{figure*}



Latent diffusion solvers typically generate superior samples~\cite{rombach2022high}. However, both pixel and latent-based diffusion solvers still face challenges from unstable optimization processes and the gradient descent strategy, which result in sub-optimal samples. To improve the stability of solvers and achieve the best quality samples, we propose a new optimization method for diffusion-based solvers, called History Gradient Update. In this section, we explain how our HGU solves the inverse problem with two variants.

\textbf{Momentum-variant. } Given a well-trained latent-based score model $\s^\z_\theta$, we have the posterior mean:
\begin{equation}
	\Ed \left[\z_0|\z_t\right] = \frac{1}{\sqrt{\bar{\alpha}}} \left(\z_t-\sqrt{1-\bar{\alpha}}\s^\z_\theta(\z_t,t)\right), t \in [0, T],
\end{equation}
where $\z_T \sim \Nc(\bm{0}, \Ib)$. Thus we can add historical gradient to the reconstruction as:
\begin{align}
	\z_{t-1} &= \z'_{t-1} - \eps_t \m^t, \\
	\z'_{t-1} &= \frac{\sqrt{\alpha_t}(1-\bar{\alpha}_{t-1})}{1-\bar{\alpha}_t}\z_t + \frac{\sqrt{\bar{\alpha}_{t-1}}\beta_t}{1-\bar{\alpha}_t}\Ed \left[\z_0|\z_t\right] + g(t)\kb ,\\
	\m_t &= \eta \m_{t-1} + (1-\eta) \nabla_{\z_t} \gU \left( \y, \Ac \mathcal{D}\left(\Ed \left[\z_0|\z_t\right]\right) \right),
\end{align}
where $\kb \sim \Nc(\bm{0}, \Ib)$. The final results can be obtained by decoding the latent $\z_0$ as $\hat{\x}_0 = \mathcal{D}(\z_0)$.

\textbf{Improved-Momentum-variant. } Similarly to the Momentum-variant, we have the same predicted posterior mean $\Ed \left[\z_0|\z_t\right]$. The Momentum-variant only considers the first momentum information. Thus, we develop a new variant, named Improved-Momentum-variant HGU (iGDM), which includes the both first momentum and second momentum information~\cite{kingma2014adam}. We derive the Improved-Momentum-variant HGU as:
\begin{align}
	\z_{t-1} &= \z'_{t-1} - \eps_t \frac{\m^t}{\sqrt{\vbb^t} + \varepsilon}, \\
	\m_t &= \eta_1 \m_{t-1} + (1 - \eta_1) \gU \left( \y, \Ac \mathcal{D}\left(\Ed \left[\z_0|\z_t\right]\right) \right), \\
	\vbb_t &= \eta_2 \vbb_{t-1} + (1 - \eta_2) \gU \left( \y, \Ac \mathcal{D}\left(\Ed \left[\z_0|\z_t\right]\right) \right)^2,
\end{align}
where $\varepsilon$ helps improve the numerical stability. The detail pseudocode of the above variant HGU is provided in Algorithm~\ref{algo:ldir_momentum} and~\ref{algo:ldir_adam}.


It should be noted that our iGDM and Adam~\cite{kingma2014adam} both use the first and second momentum to optimize variables. However, Adam is designed as an optimizer for neural network training, while our iGDM is designed to maintain consistent measurements during the solving process. Therefore, based on their purposes, there are two major differences between our iGDM and Adam: (1) The Adam optimizer assumes that the parameters in the neural network follow the $L2$ distribution, so it applies a weight decay algorithm to preserve this assumption. In contrast, our iGDM does not make any prior assumption on the variables, because the prior is approximated by the diffusion models. Thus, iGDM does not include the weight decay algorithm. (2) The second momentum in the Adam optimizer is often unstable and requires a term added to the denominator to improve numerical stability. The value of this term would greatly affect the performance of the neural network. However, this term in iGDM has little influence on the performance (shown in Tab.~\ref{tab:function_optim}), which may be because our algorithm can provide a stable second moment. This term is necessary to prevent the unlikely case where the second momentum is zero in iGDM.

\begin{figure*}[t]
	\centering
	\includegraphics[width=1\linewidth]{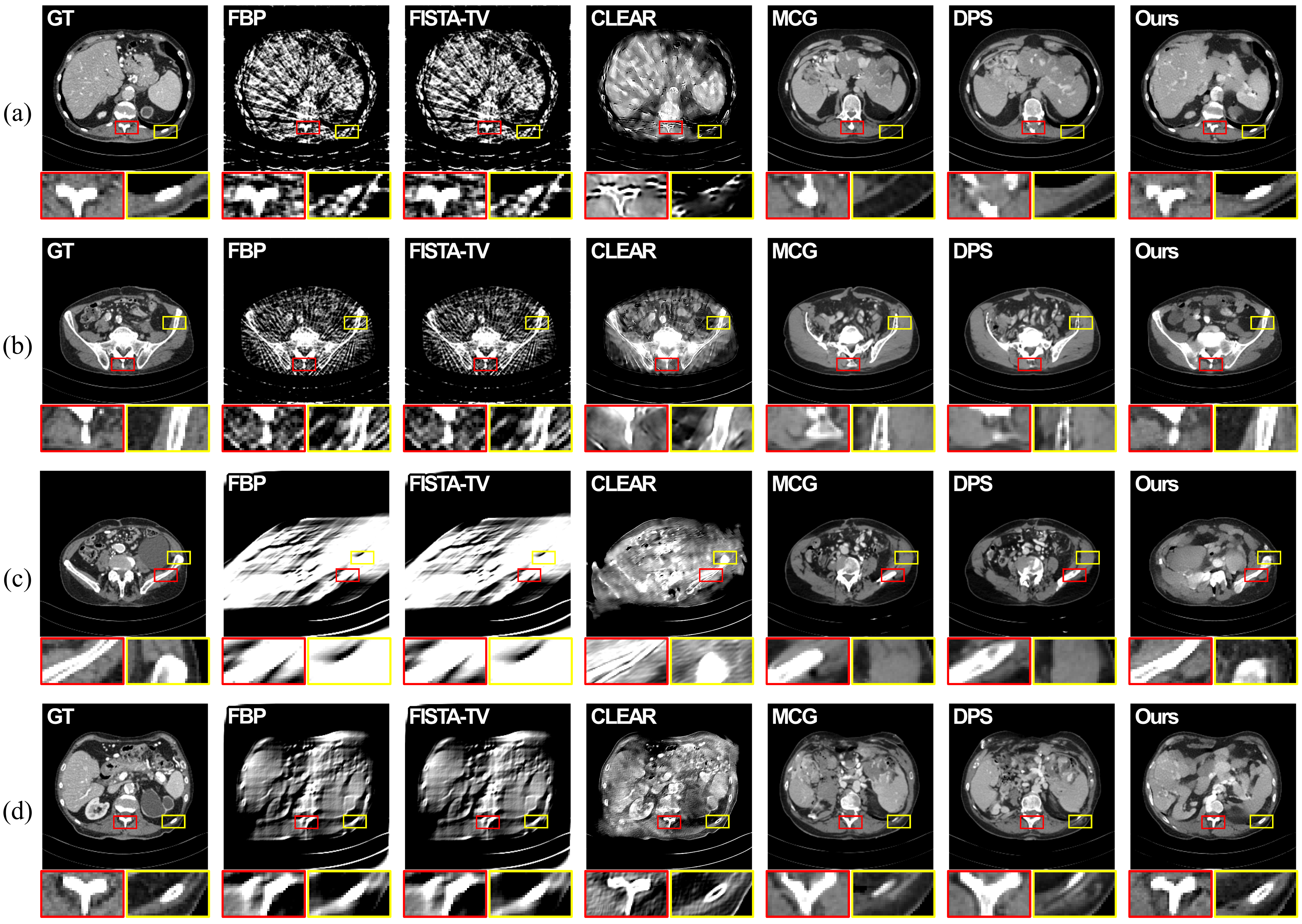}
	\caption{Qualitative results of medical sparse data reconstruction. (a) represents the visual results of 18 sparse-view CT reconstruction. (b) represents the visual results of 32 sparse-view CT reconstruction. (c) represents the visual results of 45 degree limited-angle CT reconstruction. (d) represents the visual results of 90 degree limited-angle CT reconstruction. The resolution of CT images is $256\times256$ The display window of CT images is set to $\left[-150, 256\right]$ HU. The standard measurement of CT is 512 views around 180 degrees.}
	\label{fig:mayo256vis}
\end{figure*}

\subsection{Theoretical analysis of history gradient update}
\label{sec:hgu}
In the following, we show the theoretical analysis of the gradient decent algorithm and our history gradient update in the context of diffusion-based solvers. The basic formulation of gradient guidance in Eq.~\ref{eq:lds} corresponds to a simple gradient descent scheme. However, previous work~\cite{chung2022improving,chung2022diffusion} did not provide convergence proofs for the use of gradient descent algorithms on the data fidelity term. Here, we give the proofs for the use of gradient descent algorithms on the data fidelity term. We also demonstrate through theoretical analysis that incorporating historical gradients into updates of the data fidelity term can still convergence.
The following definitions shows our definition on the data fidelity term and the convergence criteria:
\begin{definition}[Data fidelity term]
	For all $t \in \Rd$, the data fidelity term $U_t(\z)$ is a convex function related to latent variable $\z$, then any $\z_1$ and $\z_2$ in a given domain and $\forall a \in (0, 1)$ have:
	\begin{align}
		U_t(a\z_1 + (1-a)\z_2) & \leq a U_t(\z_1) + (1 - a)U_t(\z_2), \\
		U_t(\z_2) & \geq U_t(\z_1) + \left\langle \nabla U_t(\z_1),\z_2-\z_1\right\rangle .
	\end{align}
\end{definition}
\begin{definition}[Convergence Criteria]
	When $U_t(\z)$ is a convex function, the regret algorithm $R(T)$ from~\cite{Zinkevich2003OnlineCP} is chosen as the statistical quantity:
	\begin{equation}
		R(T) = \sum_{t=1}^T U_t \left(\z^{(t)}\right) - \min_\z \sum_{t=1}^T U_t (\z)	.
	\end{equation}
	When $T \to \infty$ and $R(T) / T \to 0$, we can conduct that the gradient descent algorithm is convergent, i.e., $\z \to \argmin_\z \sum_{t=1}^{T}U_t(\z) \triangleq \z^*$, not only does it converge to some value, but this value minimizes the data fidelity term. Subject to convergence of the algorithm, we generally assume that \textbf{1.} the slower $R(T)$ grows with $T$, the faster the algorithm converges, and \textbf{2.} the slower the guidance rate $\epsilon$ decays for the same growth rate, the faster the algorithm converges.
\end{definition}
Based on the above definitions, we make assumptions on the latent variable $\z$ and the gradient $\g$:
\begin{assumption}\label{var_bounded}
	Latent variable $\z$ is bounded, i.e., $\|\z - \z'\|_2 \leq D, \forall \z, \z'$, and the	gradient $\g = \nabla_{\z_t} \gU \left( \y, \Ac \mathcal{D}\left(\Ed \left[\z_0|\z_t\right]\right)\right) $ is also bounded, i.e., $\|\g_t\|_2 \leq G, \forall t$.
\end{assumption}
This assumption is very common and has been widely adopted in neural network analysis. Now we can provide a theoretical proof of convergence for the GD algorithm.
\begin{theorem}
	For the gradient descent algorithm used in~\cite{chung2022improving,chung2022diffusion}, we have the upper bound of GD as:
	\begin{equation}
		R(T) = \frac{1}{2\eps_T} D^2 + \frac{G^2}{2} \sum_{t=1}^{T} \eps_t^2.
	\end{equation}
	When $T \to \infty$, the GD algorithm is tend to convergence:
	\begin{equation}
		\lim_{T \to \infty} \frac{R(T)}{T} \leq \lim_{T \to \infty} \frac{1}{T} \left[\frac{1}{2\eps_T} D^2 + \frac{G^2}{2} \sum_{t=1}^{T} \eps_t^2\right] = 0.
	\end{equation}
	Considering $\eps_t$ is a function w.r.t $t$: $\eps_t = \epsilon(t)$, and a polynomial decay with a constant $C$ is employed: $\eps_t = C/t^n, n \geq 0$:
	\begin{equation}
		R(T) \leq D^2 \frac{T^n}{2 C} + \frac{G^2C}{2}\left(\frac{1}{1-n}T^{1-n} - \frac{n}{1-n}\right).
	\end{equation}
	Thus, $R(T) = \gO\left(T^{\max(n, 1-n)}\right)$. When $n=1/2$, $R(T)$ attain the optimal upper bound $\gO\left(T^{1/2}\right)$.
	\label{proofofgd}
\end{theorem}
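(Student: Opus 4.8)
The plan is to carry out the standard online-convex-optimization (OCO) regret analysis of projected/plain gradient descent, specialized to the setting where the iterate is the latent $\z^{(t)}$, the per-step loss is the data fidelity term $U_t$, and the update is $\z^{(t+1)} = \z^{(t)} - \eps_t \g_t$ with $\g_t = \nabla U_t(\z^{(t)})$. First I would introduce the potential function $\Phi_t = \tfrac{1}{2\eps_t}\|\z^{(t)} - \z^*\|_2^2$ where $\z^* = \argmin_\z \sum_{t=1}^T U_t(\z)$, and expand $\|\z^{(t+1)} - \z^*\|_2^2 = \|\z^{(t)} - \z^*\|_2^2 - 2\eps_t\langle \g_t, \z^{(t)} - \z^*\rangle + \eps_t^2\|\g_t\|_2^2$. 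Rearranging gives $\langle \g_t, \z^{(t)} - \z^*\rangle = \tfrac{1}{2\eps_t}\big(\|\z^{(t)} - \z^*\|_2^2 - \|\z^{(t+1)} - \z^*\|_2^2\big) + \tfrac{\eps_t}{2}\|\g_t\|_2^2$. By convexity of $U_t$ (the subgradient inequality in the Data fidelity definition), $U_t(\z^{(t)}) - U_t(\z^*) \leq \langle \g_t, \z^{(t)} - \z^*\rangle$, so summing over $t=1,\dots,T$ bounds $R(T)$ by a telescoping term plus $\tfrac{1}{2}\sum_t \eps_t\|\g_t\|_2^2$.

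Next I would handle the telescoping sum. With a decreasing schedule $\eps_t$, the sum $\sum_{t=1}^T \tfrac{1}{2}\big(\tfrac{1}{\eps_t}\|\z^{(t)}-\z^*\|_2^2 - \tfrac{1}{\eps_{t}}\|\z^{(t+1)}-\z^*\|_2^2\big)$ can be reorganized by Abel summation into $\tfrac{1}{2}\sum_{t}\big(\tfrac{1}{\eps_t} - \tfrac{1}{\eps_{t-1}}\big)\|\z^{(t)}-\z^*\|_2^2$ plus boundary terms; bounding every $\|\z^{(t)}-\z^*\|_2^2 \leq D^2$ via Assumption~\ref{var_bounded} collapses the whole thing to $\tfrac{D^2}{2\eps_T}$. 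Combined with $\|\g_t\|_2 \le G$ this yields $R(T) \le \tfrac{1}{2\eps_T}D^2 + \tfrac{G^2}{2}\sum_{t=1}^T \eps_t^2$ (I note the stated bound has $\eps_t^2$ rather than $\eps_t$; I would either adopt the convention that the $\tfrac{\eps_t}{2}\|\g_t\|^2$ term is written with the schedule already substituted, or simply match whichever normalization the paper's update rule in Algorithm~\ref{algo:ldir_momentum} uses — this is a cosmetic bookkeeping choice, not a mathematical obstacle). Dividing by $T$ and noting $\eps_T \to 0$ slower than $1/T$ and $\sum \eps_t^2 = o(T)$ gives $R(T)/T \to 0$, hence convergence to $\z^*$.

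For the polynomial-decay corollary, I would substitute $\eps_t = C/t^n$: the first term becomes $\tfrac{D^2}{2C}T^n$, and the second requires estimating $\sum_{t=1}^T t^{-2n}$. I would bound this sum by the integral $\int_1^T s^{-2n}\,ds = \tfrac{1}{1-2n}(T^{1-2n}-1)$ (for $n \ne 1/2$), though to land exactly on the paper's stated expression $\tfrac{1}{1-n}T^{1-n} - \tfrac{n}{1-n}$ I suspect the intended schedule inside the second term is effectively $\eps_t \sim C/t^n$ appearing linearly, giving $\sum t^{-n} \le \tfrac{1}{1-n}(T^{1-n}-1) + 1$; I would reconcile the algebra so that $R(T) = \gO(T^n) + \gO(T^{1-n}) = \gO(T^{\max(n,1-n)})$, which is minimized at $n = 1/2$ yielding $\gO(T^{1/2})$, and both of these are $o(T)$ exactly when $0 < n < 1$.

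The main obstacle I anticipate is not the OCO mechanics — which are textbook — but rather reconciling the abstract regret framework with the actual diffusion sampler: the ``iterates'' $\z^{(t)}$ in the algorithm are not pure gradient-descent iterates on $U_t$ but are interleaved with the stochastic reverse-diffusion step $\z'_{t-1}$, and the ``loss'' $U_t$ changes with $t$ partly because $\Ed[\z_0\mid\z_t]$ changes. I would address this by (i) invoking Assumption~\ref{var_bounded} to absorb all the diffusion-induced displacement into the bound $D$ on $\|\z-\z'\|$, effectively treating the stochastic drift as part of the adversarial choice of $U_t$ in the OCO game, and (ii) noting that the convexity assumption on $U_t$ is postulated (it is the Data fidelity definition), so no verification is needed there. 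The honest caveat is that this makes the theorem a statement about the gradient-guidance subroutine under an idealized convexity hypothesis rather than about the full nonconvex sampler, which I would state plainly rather than overclaim.
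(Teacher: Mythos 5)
Your proposal follows essentially the same route as the paper's own proof: the standard OCO regret argument with the expand--rearrange--telescope decomposition, the $D$ and $G$ bounds from Assumption~\ref{var_bounded}, and the integral comparison for the polynomial schedule $\eps_t = C/t^n$. You also correctly flagged the paper's own bookkeeping inconsistency (the second term is really $\frac{G^2}{2}\sum_t \eps_t$, which is what the stated polynomial bound corresponds to, even though the displayed bound writes $\sum_t \eps_t^2$), and your added caveat about requiring $0<n<1$ and about the idealized convexity hypothesis is if anything more careful than the paper.
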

By leveraging the result of Theorem~\ref{proofofgd}, we can ensure the GD algorithm can converge on the data fidelity term. In the training of neural networks, the integration of historical gradient information with gradient descent algorithms has achieved significant success~\cite{sutskever2013importance,kingma2014adam}. Therefore, similarly, we introduce historical gradient information into the optimization process for the data fidelity term to form our history gradient update method. Here, we demonstrate a variants of gradient update policies based on first and second moments. For any dimension $i$ of the variable $\z$, we have:
\begin{align}
	z^{(t+1)}_i & = z_i^{(t)} - \eps_t \frac{\hat{m}_i^{(t)}}{\sqrt{\hat{v}_i^{(t)}}} \nonumber \\
	             & = z_i^{(t)} - \frac{\eps_t}{1 - \prod_{s=1}^t \eta_{1,s}} \frac{\eta_{1,t} m^{(t-1)}_i + (1-\eta_{1,t})g_{t,i}}{\sqrt{\hat{v}_i^{(t)}}}, \nonumber \\
	             \hat{m}_i^{(t)} & =\frac{1-\eta_1}{1-\eta_1^t}\sum_{s=1}^{t}\eta_{1}^{t-s}g_s, \quad \hat{v}_i^{(t)}=\frac{1-\eta_2}{1-\eta_2^t}\sum_{s=1}^{t}\eta_{2}^{t-s}g_s^2. 
\end{align}
Where $\hat{m}_i^{(t)}, \hat{v}_i^{(t)}$ are the first and second momentum of gradient. $(\eta_1, \eta_2)$ are the coefficient for $\hat{m}_i^{(t)}, \hat{v}_i^{(t)}$. Based on the Assumption~\ref{var_bounded}, we have a theoretical proof convergence for Improved-Momentum-variant history gradient update:
\begin{theorem}
	For the Improved-Momentum-variant history gradient update, we have the upper bound as:
	\begin{align}
		& R(T) \leq \frac{\sum_{i=1}^{d}D^2_i G_i}{2\eps_T(1-\eta_{1,1})} + \left(\sum_{i=1}^{d}D_iG_i\right)\left(\sum_{t=1}^{T}\frac{\eta_{1,t}}{1-\eta_{1,t}}\right) + \nonumber \\
		& \left(\sum_{i=1}^{d}G_i\right)\left[\sum_{t=1}^{T}\frac{\gamma_t}{2(1-\eta_{1,t})} \cdot \sum_{s=1}^{t}\frac{(1-\eta_{1,s})^2\left(\prod_{r=s+1}^t\eta_{1,r}\right)^2}{(1-\eta_2)\eta_2^{t-s}}\right],
	\end{align}
	where $\gamma_t = \frac{\eps_t}{1 - \prod_{s=1}^t \eta_{1,s}}$. Similarly, 	When $T \to \infty$, this Improved-Momentum-variant history gradient update algorithm is tend to convergence. Considering $\eta_{1,t} \in (0, 1), \forall t, \eta_{1,1} \geq \eta_{1,2} \geq \dots \geq \eta_{1,T} \geq \dots$ and $\eta_2 \in (0,1), \frac{\eta_{1,t}}{\sqrt{\eta_2}}\leq\sqrt{c} < 1$, we have:
	\begin{align}
		& R(T) \leq \frac{\sum_{i=1}^{d}D^2_i G_i}{2\eps_T(1-\eta_{1,1})} + \left(\sum_{i=1}^{d}D_iG_i\right)\left(\sum_{t=1}^{T}\eta_{1,t}\right) + \nonumber \\
		& \left(\sum_{i=1}^{d}G_i\right)\left[\frac{\sum_{t=1}^{T}\eps_t}{2(1-\eta_{1,1})^2(1-\eta_2)(1-c)}\right].
	\end{align}
	Similar to Theorem~\ref{proofofgd}, when $n=1/2$, $R(T)$ attain the optimal upper bound $\gO\left(T^{1/2}\right)$.
	\label{proofofadam}
\end{theorem}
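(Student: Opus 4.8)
The plan is to mimic the classical online-convex-optimization regret analysis for adaptive momentum methods (the Adam-style argument of Kingma--Ba, as corrected by Reddi et al.), adapted to the notation here where the "time index" $t$ runs over diffusion steps and the update on coordinate $i$ is $z_i^{(t+1)} = z_i^{(t)} - \gamma_t \hat m_i^{(t)}/\sqrt{\hat v_i^{(t)}}$ with $\gamma_t = \eps_t/(1-\prod_{s=1}^t \eta_{1,s})$. The starting point is the same potential-function telescoping used in Theorem~\ref{proofofgd}: expand $\|\z^{(t+1)} - \z^*\|_2^2$ coordinatewise, using $z_i^{(t+1)} - z_i^* = z_i^{(t)} - z_i^* - \gamma_t \hat m_i^{(t)}/\sqrt{\hat v_i^{(t)}}$, to isolate the inner product $\langle \hat m^{(t)}, \z^{(t)} - \z^*\rangle$. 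Then substitute the bias-corrected recursion $\hat m_i^{(t)} = \eta_{1,t}\, m_i^{(t-1)}/(\ldots) + (1-\eta_{1,t}) g_{t,i}/(\ldots)$ so that the "true" gradient term $\langle \g_t, \z^{(t)}-\z^*\rangle$ appears with the right coefficient; convexity (Definition~1, the subgradient inequality) converts $\sum_t \langle \g_t, \z^{(t)}-\z^*\rangle$ into a bound on $R(T)$.

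Next I would collect the three groups of leftover terms, which become the three summands in the stated bound. The first term $\frac{\sum_i D_i^2 G_i}{2\eps_T(1-\eta_{1,1})}$ comes from telescoping the $\|\z^{(t)}-\z^*\|^2$ differences against the (increasing, because $\eps_t$ decays) coefficients $\sqrt{\hat v_i^{(t)}}/\gamma_t$, bounding $\sqrt{\hat v_i^{(t)}} \le G_i$ by Assumption~\ref{var_bounded}, bounding each squared-distance factor by $D_i^2$, and using $1-\prod_{s\le t}\eta_{1,s} \ge 1-\eta_{1,1}$. The second term, $(\sum_i D_i G_i)\sum_t \frac{\eta_{1,t}}{1-\eta_{1,t}}$, is the "momentum error": the part of $\hat m^{(t)}$ proportional to $m^{(t-1)}$ rather than $g_t$, controlled by $|z_i^{(t)} - z_i^*| \le D_i$, $|m_i^{(t-1)}|\le G_i$ (a consequence of the bounded-gradient assumption, since $m$ is a convex combination of past gradients), and the factor $1/(1-\eta_{1,t})$ from the bias correction. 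The third term is the standard $\sum_t \frac{\gamma_t}{2}\|\hat m^{(t)}/\sqrt[4]{\hat v^{(t)}}\,\|^2$-type contribution, where one expands $\hat m_i^{(t)}$ as the weighted sum $\frac{1-\eta_1}{1-\eta_1^t}\sum_{s\le t}\eta_1^{t-s} g_s$, applies Cauchy--Schwarz across the $s$-index, and lower-bounds $\hat v_i^{(t)} \ge \frac{1-\eta_2}{1-\eta_2^t}\eta_2^{t-s} g_{s,i}^2$ to cancel one power of $g_{s,i}$ — this yields exactly the double sum $\sum_{s\le t}\frac{(1-\eta_{1,s})^2(\prod_{r=s+1}^t \eta_{1,r})^2}{(1-\eta_2)\eta_2^{t-s}}$ after absorbing the remaining $|g_{s,i}|\le G_i$.

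For the simplified (asymptotic) bound I would impose the stated hypotheses — $\eta_{1,t}$ nonincreasing so $\eta_{1,t}/(1-\eta_{1,t}) \le \eta_{1,t}/(1-\eta_{1,1})$ and $1-\prod_{s\le t}\eta_{1,s}\ge (1-\eta_{1,1})$, and the contraction condition $\eta_{1,t}/\sqrt{\eta_2}\le \sqrt c < 1$ — and swap the order of the $t,s$ double summation. Fixing $s$ and summing over $t\ge s$ turns $\sum_{t\ge s}\frac{(\prod_{r=s+1}^t\eta_{1,r})^2}{\eta_2^{t-s}}$ into a geometric series in $c$, bounded by $1/(1-c)$, which collapses the third term to $(\sum_i G_i)\frac{\sum_t \eps_t}{2(1-\eta_{1,1})^2(1-\eta_2)(1-c)}$. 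Plugging in the polynomial schedule $\eps_t = C/t^n$ and reusing the $\sum_{t=1}^T t^{-n} = \gO(T^{1-n})$ estimate and the $1/(2\eps_T) = \gO(T^n)$ estimate from Theorem~\ref{proofofgd} gives $R(T) = \gO(T^{\max(n,1-n)})$, optimized at $n=1/2$.

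The main obstacle I anticipate is the bookkeeping in the third term: getting the bias-correction factors $(1-\eta_1^t)^{-1}$, the per-step coefficients $(1-\eta_{1,s})$, and the product $\prod_{r=s+1}^t\eta_{1,r}$ to line up exactly as written, while simultaneously doing the Cauchy--Schwarz split in a way that leaves precisely one factor of $|g_{s,i}|$ to be cancelled by the lower bound on $\sqrt{\hat v_i^{(t)}}$ and one to be bounded by $G_i$. The second potential pitfall is the monotone-coefficient step: because $\eps_t$ decreases, $\sqrt{\hat v_i^{(t)}}/\gamma_t$ is not automatically monotone in $t$ (the $\hat v$ factor can fluctuate), so the telescoping of the distance terms needs the nonincreasing-$\eta_{1,t}$ assumption together with Assumption~\ref{var_bounded} to force the needed monotonicity — exactly the subtlety that the Reddi et al. counterexample exploits, which is why the hypothesis $\eta_{1,t}/\sqrt{\eta_2}\le\sqrt c<1$ is imposed.
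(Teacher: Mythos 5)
Your proposal follows essentially the same route as the paper's proof: the same coordinatewise expansion of $\bigl(z_i^{(t+1)}-z_i^*\bigr)^2$ into the three summands (telescoped distance term, momentum-error term, and the $\bigl(m_i^{(t)}\bigr)^2/\sqrt{\hat v_i^{(t)}}$ term handled by a Cauchy--Schwarz split against the $\eta_2$-weights), the same use of Assumption~2 to bound $\sqrt{\hat v_i^{(t)}}\le G_i$ and $|m_i^{(t-1)}|\le G_i$, and the same geometric-series collapse under $\eta_{1,t}/\sqrt{\eta_2}\le\sqrt{c}<1$ for the simplified bound, followed by the Theorem~2 rate argument at $n=1/2$. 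The monotonicity of the coefficients $\sqrt{\hat v_i^{(t)}}/\eps_t$ that you flag as a potential pitfall is indeed simply imposed as a hypothesis in the paper's proof rather than derived, so your treatment is, if anything, more careful on that point.
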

\begin{remark}
Note that the GD algorithm and our Improved-Momentum-variant history gradient update algorithm have similar upper bound which mean they can both converge when optimize the data fidelity term. Although Theorem~\ref{proofofgd}and~\ref{proofofadam} both require $T \to \infty$, in real-world applications, both the algorithms can converge with limited iterations. In addition, although both the algorithms have the same optimal upper bound $\gO\left(T^{1/2}\right)$, the Improved-Momentum-variant history gradient update has a faster and more stable convergence process than the naive GD algorithm when deals with the various inverse problems in real-world applications. While we have only proven the convergence of GD and iGDM, these proofs can also be extended to update algorithms using historical gradient information, such as Gradient Descent with Momentum (GDM)~\cite{sutskever2013importance}.
\end{remark}
It is worth noting that our history gradient update method is a generalize optimizing method for the data fidelity term, thus, we can apply it to our latent-diffusion-based solver~\cite{song2023solving2} and previous pixel-diffusion-based solvers (\textit{e.g.,\xspace} MCG~\cite{chung2022improving} and DPS~\cite{chung2022diffusion}). Details of the proof process are presented in the supporting document.

\begin{table*}[t]
	\centering
	\caption{Quantitative evaluation (PSNR, SSIM) of medical image reconstruction on AAPM test $256\times256$ dataset. We mark \textbf{bold} for the best and \underline{underline} for the second best. CLEAR~\cite{zhang2021clear} is a supervised method.}
		\begin{tabular}{@{\extracolsep{4pt}}lcccccccc@{}}
			\toprule
			\multicolumn{1}{l}{\multirow{3}{*}{\textbf{Method}}} & \multicolumn{4}{c}{\textbf{Sparse view}}                 & \multicolumn{4}{c}{\textbf{Limited angle }}   \\ \cmidrule{2-5} \cmidrule{6-9} & \multicolumn{2}{c}{18} & \multicolumn{2}{c}{32} & \multicolumn{2}{c}{45} & \multicolumn{2}{c}{90} \\ \cmidrule{2-3} \cmidrule{4-5} \cmidrule{6-7} \cmidrule{8-9} 
			\multicolumn{1}{c}{}  & PSNR $\uparrow$      & SSIM $\uparrow$     & PSNR $\uparrow$      & SSIM $\uparrow$     & PSNR $\uparrow$      & SSIM $\uparrow$     & PSNR $\uparrow$      & SSIM $\uparrow$  \\ \midrule
			FBP                                         &     24.76       &    0.5296       &     28.03       &     0.6779      &      16.65      &      0.5422     &      20.35      &     0.5113      \\
			FISTA-TV                                    &     24.86       &     0.5408      &      28.14      &      0.6888     &     16.66       &     0.5463      &    20.40        &    0.5241      \\
			CLEAR                                       &     \underline{32.28}      &     \underline{0.8798}     &      \underline{36.24}      &     \underline{0.9257}      &     25.71       &      \underline{0.8559}     &       \underline{31.60}     &    \textbf{0.9223}      \\
			MCG                                         &      28.54      &  0.8135         &     28.98       &     0.8242     &      26.08      &     0.7418      &     28.44      &     0.8079     \\
			DPS   & 28.55 & 0.8140 & 28.97 & 0.8242 & \underline{28.25} & 0.8204 & 28.25 & 0.8088 \\ \midrule
			LHGU  & \textbf{39.01} & \textbf{0.9552} & \textbf{39.77} & \textbf{0.9612} & \textbf{30.05} & \textbf{0.8747} & \textbf{32.68} & \underline{0.9032} \\ \bottomrule
		\end{tabular}%
	\label{tab:ct_images_256}
\end{table*}

\section{Experiments}

\subsection{Experimental setup}
\textbf{Models and datasets.} For medical image reconstruction, we train our DDPM and LDM model on the 2016 American Association of Physicists in Medicine (AAPM) grand challenge dataset~\cite{mccollough2017low}. The dataset has normal-dose data from 10 patients. 9 patients’ data are used for training, and 1 for validation which contains 526 images. To simulate low-dose imaging, a parallel-beam imaging geometry with 180 degrees was employed. Regarding inpainting and super-resolution tasks, we test our method on CelebAHQ 1k $256\times256$ dataset~\cite{liu2015faceattributes} and LSUN-bedroom $256\times256$ dataset~\cite{yu2015lsun}. We utilize pretrained DDPM and LDM models from the open-source model repository from~\cite{ho2020denoising,rombach2022high}. All the images are normalized to range $\left[0,1\right]$. More details including the hyper-parameters are listed in Appendix.~\hyperref[app:experiment]{B}. We call the Latent diffusion solver with HGU as LHGU.

\textbf{Measurement operators.} For sparse-view CT reconstruction, we uniformly sample 18 and 32 views. For limited-angle CT reconstruction, we restrict the imaging degree range to $45$ and $90$ degrees with 128 views using parallel beam geometry. For random inpainting, following~\cite{chung2022improving,chung2022diffusion}, we mask out $99\%$ of the total pixels (including all the channels). For super-resolution, we use $8\times$ bilinear downsampling. Gaussian noise is added in the nature image evaluation after a forward operation performed with $\sigma=0.05$. The medical data are evaluated without noise.

\begin{table*}[h]
	\centering
	\caption{Quantitative evaluation (PSNR, SSIM) of medical image reconstruction on AAPM test $512\times512$ dataset for zero-shot methods. We mark \textbf{bold} for the best and \underline{underline} for the second best.}
		\begin{tabular}{@{\extracolsep{4pt}}lcccccccc@{}}
			\toprule
			\multicolumn{1}{c}{\multirow{3}{*}{\textbf{Method}}} & \multicolumn{4}{c}{\textbf{Sparse view}}                 & \multicolumn{4}{c}{\textbf{Limited angle }}   \\ \cmidrule{2-5} \cmidrule{6-9} & \multicolumn{2}{c}{18} & \multicolumn{2}{c}{32} & \multicolumn{2}{c}{45} & \multicolumn{2}{c}{90} \\ \cmidrule{2-3} \cmidrule{4-5} \cmidrule{6-7} \cmidrule{8-9} 
			\multicolumn{1}{c}{}  & PSNR $\uparrow$       & SSIM $\uparrow$     & PSNR $\uparrow$      & SSIM $\uparrow$     & PSNR $\uparrow$      & SSIM $\uparrow$     & PSNR $\uparrow$      & SSIM $\uparrow$  \\ \midrule
			FBP                                         &      23.48      &     0.5096      &     26.70       &     0.6423      &     16.53       &     0.5480      &     19.88       &     0.4932      \\
			FISTA-TV                                    &     \underline{23.93}       &     \underline{0.5566}      &     \underline{27.11}       &     \underline{0.6768}      &      \underline{16.59}      &    \underline{0.5695}       &     \underline{20.08}       &     \underline{0.5348}     \\ \midrule
			LHGU                                        &     \textbf{36.94}       &    \textbf{0.9216}       &      \textbf{37.63}      &      \textbf{0.9373}     &      \textbf{30.47}      &     \textbf{0.8579}      &       \textbf{33.41}     &     \textbf{0.8837}     \\ \bottomrule
		\end{tabular}%
	\label{tab:ct_images_512}
\end{table*}
\subsection{Evaluation on medical data}

To assess the performance of HGU in reconstructing medical sparse data, we compare it with several recent state-of-the-art methods: manifold constrained gradients (MCG)~\cite{chung2022improving}, diffusion posterior sampling (DPS)~\cite{chung2022diffusion}, comprehensive learning enabled adversarial reconstruction (CLEAR)~\cite{zhang2021clear}, fast iterative shrinkage-thresholding algorithm with total-variation (FISTA-TV), and the analytical reconstruction method, filtered back projection (FBP). Peak-signal-to-noise-ratio (PSNR) and structural similarity index measure (SSIM) are used for quantitative evaluation. 

The quantitative results of medical sparse data reconstruction are demonstrated in Tab.~\ref{tab:ct_images_256} and Tab.~\ref{tab:ct_images_512}. LHGU outperforms all other state-of-the-art methods by a significant margin across all experiment settings. We also compare our method in the high-resolution CT image reconstruction task with zero-shot methods. However, due to the large memory consumption of DDPM, it is challenging to train DDPM models for high-resolution reconstruction. Thus, we exclude MCG and DPS which rely on DDPM from Tab.~\ref{tab:ct_images_512}. The results show that LHGU provides noise-free reconstruction results, although there is still a significant gap between the reconstructed images and the ground truth. In contrast, other zero-shot methods fail to reconstruct meaningful results. 


The qualitative results of medical sparse image reconstruction are demonstrated in Fig.~\ref{fig:mayo256vis} which are consistent with the quantitative results reported in Tab.~\ref{tab:ct_images_256}. In Fig.~\ref{fig:mayo256vis}, we compare our method with the state-of-the-art zero-shot unsupervised and supervised methods. We observe that LHGU can provide high-quality reconstructions, especially for the sparse view reconstruction task. Specifically, LHGU can provide better overall structure and nearly artifact-free reconstruction. Additionally, our method also provides better reconstructions than other methods in limited angle reconstruction tasks. 

\begin{figure*}[t]
	\centering
	\includegraphics[width=1\linewidth]{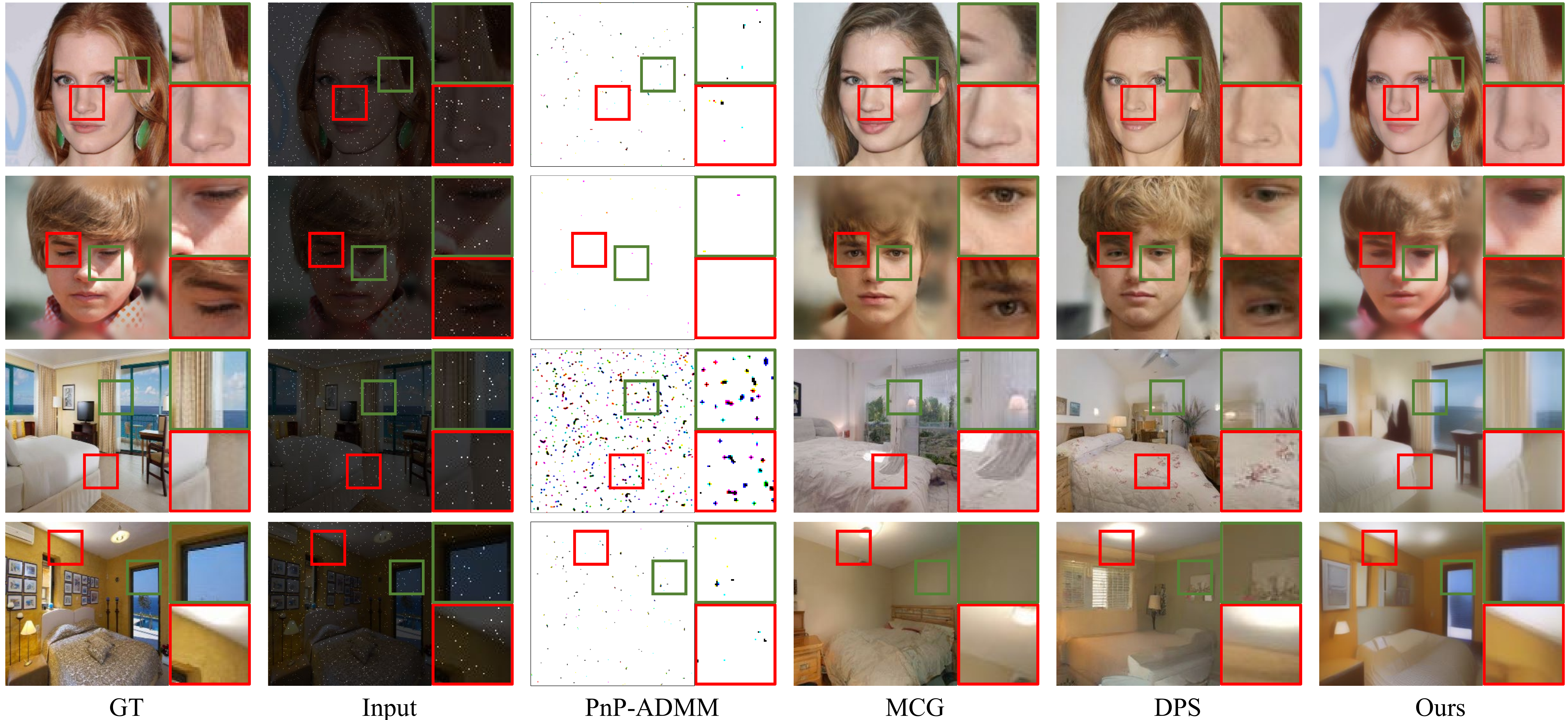}
	\caption{Qualitative results of $99\%$ inpaint nature image reconstruction with Gaussian noise ($\sigma=0.05$).}
	\label{fig:naturevis}
\end{figure*}
\begin{figure*}[t]
	\centering
	\includegraphics[width=1\linewidth]{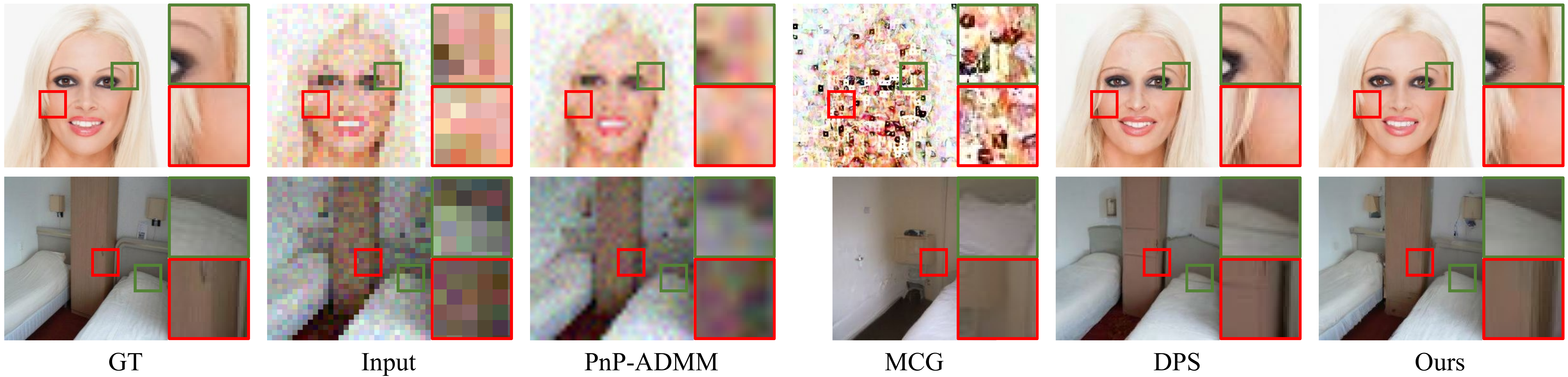}
	\caption{Qualitative results of $8\times$ super-resolution nature image reconstruction with Gaussian noise ($\sigma=0.05$).}
	\label{fig:naturevis_sr}
\end{figure*}

\begin{table*}[t]
	\centering
	\caption{Quantitative evaluation (PSNR, SSIM, LPIPS) of nature image reconstruction on CelebAHQ and LSUN-bedroom dataset. We mark \textbf{bold} for the best and \underline{underline} for the second best.}
	\resizebox{\textwidth}{!}{%
		\begin{tabular}{@{\extracolsep{1pt}}lccccccccccccc@{}}
			\toprule
			\multicolumn{1}{c}{\multirow{3}{*}{\textbf{Method}}}     & \multirow{3}{*}{\textbf{Type}}                                      & \multicolumn{6}{c}{\textbf{CelebAHQ}}                           & \multicolumn{6}{c}{\textbf{LSUN-bedroom}}                             \\ \cmidrule{3-8} \cmidrule{9-14}
			\multicolumn{2}{c}{}                                                                   & \multicolumn{3}{c}{\textbf{Inpaint}} & \multicolumn{3}{c}{\textbf{SR (8$\times$)}} & \multicolumn{3}{c}{\textbf{Inpaint}} & \multicolumn{3}{c}{\textbf{SR (8$\times$)}} \\ \cmidrule{3-5} \cmidrule{6-8} \cmidrule{9-11} \cmidrule{12-14}
			\multicolumn{2}{c}{}                                                                    & PSNR $\uparrow$    & SSIM $\uparrow$   & LPIPS $\downarrow$   & PSNR $\uparrow$  & SSIM $\uparrow$  & LPIPS $\downarrow$ & PSNR $\uparrow$    & SSIM $\uparrow$   & LPIPS $\downarrow$  & PSNR $\uparrow$ & SSIM $\uparrow$ & LPIPS $\downarrow$ \\ \midrule
			PnP-ADMM &       Traditional IR                                                 &    3.97     &     0.3017    &    0.8916     &   22.94    &   0.6303    &    0.6820    &    5.059     &    0.3236     &    0.8940     &    \textbf{20.14}   &   0.5458    &    0.7944    \\
			MCG      & Pixel Diffusion                                                          &    18.91     &     0.5600    &    0.2544     &    12.47   &   0.1655    &   0.6713     &     16.89    &    0.4555     &    0.5486     &    9.39   &    0.0606   &   0.8698     \\
			DPS          & Pixel Diffusion                                                      &    \underline{18.95}     &     \underline{0.5614}    &    \underline{0.2543 }    &   \underline{24.36}    &    \underline{0.7116}   &   \underline{0.1089}     &     \underline{17.03}    &    \underline{0.4587}     &    \underline{0.5414}     &   19.15    &    \underline{0.5614}   &    \textbf{0.3074}    \\ \midrule
			LHGU         & Latent Diffusion                                                      &    \textbf{22.14}     &     \textbf{0.6647 }   &     \textbf{0.2280}    &   \textbf{25.27}    &    \textbf{0.7530}   &     \textbf{0.0878}   &    \textbf{20.33}     &    \textbf{0.5845}     &   \textbf{0.4858}      &   \underline{19.83}    &    \textbf{0.5762}   &    \underline{0.3253}    \\
			\bottomrule
		\end{tabular}%
	}
	\label{tab:nature_images}
\end{table*}

\subsection{Evaluation on nature images}
In order to further test the performance of our method, we compare our method against state-of-the-art methods, namely, MCG, DPS, and plug-and-play alternating direction method of multipliers (PnP-ADMM)~\cite{chan2016plug}. For quantitative analysis, we utilize three widely used perceptual evaluation metrics: LPIPS distance, PSNR, and SSIM.

The quantitative results of nature image reconstruction are illustrated in Tab.~\ref{tab:nature_images}. Our method achieves competitive results compared to the previous state-of-the-art. Specifically, we observe that our method is able to accurately reconstruct the original data and preserve the most data consistency, even when dealing with highly sparse measurements such as $99\%$ random inpainting. Additionally, we note that LHGU gains some advantages over the previous best method on the super-resolution task. 

The qualitative results of nature sparse image reconstruction are demonstrated in Fig.~\ref{fig:naturevis} and Fig.~\ref{fig:naturevis_sr}. Notably, the traditional iterative method PnP-ADMM failed to produce satisfactory results for both the inpainting and super-resolution tasks due to its limited prior terms. In contrast, our method outperforms the comparison methods, particularly in terms of color and structure in the inpainting task. In the super-resolution task, the results obtained by MCG exhibit many artifacts, which are likely due to the projection step\cite{chung2022diffusion}. Our method, on the other hand, achieves competitive results with DPS, the most advanced method, with small gaps.

\begin{table*}[]
\centering
\caption{Ablation evaluation (PSNR, SSIM) on the effect of our method. HGU denotes the Improved-Momentum-variant HGU. Model A is equal to DPS~\cite{chung2022diffusion} We mark \textbf{bold} for the best and \underline{underline} for the second best.}
\label{tab:ablation_dir}
\begin{tabular}{@{\extracolsep{1pt}}lcccccccccccc@{}}
\toprule
\multirow{3}{*}{\textbf{Model}} &
  \multirow{3}{*}{\textbf{\begin{tabular}[c]{@{}c@{}}Diffusion\\ Type\end{tabular}}} &
  \multirow{3}{*}{\textbf{\begin{tabular}[c]{@{}c@{}}Update\\ Strategy\end{tabular}}} &
  \multicolumn{4}{c}{\textbf{Sparse View}} &
  \multicolumn{4}{c}{\textbf{Limited Angle}} &
  \multirow{3}{*}{\textbf{\begin{tabular}[c]{@{}c@{}}Speed\\ (iter/s)\end{tabular}}} &
  \multirow{3}{*}{\textbf{\begin{tabular}[c]{@{}c@{}}Memory\\ (MB)\end{tabular}}} \\ \cmidrule(lr){4-7} \cmidrule(lr){8-11}  
 &
   &
   &
  \multicolumn{2}{c}{18} &
  \multicolumn{2}{c}{32} &
  \multicolumn{2}{c}{45} &
  \multicolumn{2}{c}{90} &
   &
   \\ \cmidrule(lr){4-5} \cmidrule(lr){6-7} \cmidrule(lr){8-9} \cmidrule(lr){10-11}   
 &
   &
   &
  PSNR &
  SSIM &
  PSNR &
  SSIM &
  PSNR &
  SSIM &
  PSNR &
  SSIM &
   &
   \\ \midrule
A (DPS) &
  Pixel &
  GD &
  28.55 &
  0.8140 &
  28.97 &
  0.8242 &
  28.25 &
  0.8204 &
  28.25 &
  0.8088 &
  20.88 &
  {\ul 6338} \\
B &
  Latent &
  GD &
  31.37 &
  0.8775 &
  31.88 &
  0.8891 &
  \textbf{29.82} &
  {\ul 0.8684} &
  31.65 &
  0.8884 &
  \textbf{37.03} &
  \textbf{4268} \\
C &
  Pixel &
  HGU &
  {\ul 32.83} &
  {\ul 0.8892} &
  {\ul 34.29} &
  {\ul 0.9171} &
  29.40 &
  0.8562 &
  {\ul 32.42} &
  {\ul 0.9047} &
  20.62 &
  {\ul 6338} \\
D (Ours, LHGU) &
  Latent &
  HGU &
  \textbf{39.01} &
  \textbf{0.9552} &
  \textbf{39.77} &
  \textbf{0.9612} &
  {\ul 29.60} &
  \textbf{0.8779} &
  \textbf{32.89} &
  \textbf{0.9116} &
  {\ul 36.67} &
  \textbf{4268} \\ \bottomrule
\end{tabular}
\end{table*}

\subsection{Ablation studies}
\label{manu:ablation}
We conducted ablation studies to validate the effectiveness of our approach. we compared the performance of our approach against a pixel-based iterative reconstruction approach. To ensure a fair comparison, we conducted these ablation studies on the medical image reconstruction task, as both the DDPM and LDM models were trained using the same protocol.

In Table~\ref{tab:ablation_dir}, we can observe that our method outperforms the pixel-based iterative reconstruction method, DPS, by a large margin. This result confirms that the latent-based approach is superior to the pixel-based approach in terms of both speed and accuracy. Additionally, we can see that using HGU in the pixel space can improve the performance of DPS, allowing it to surpass the naive DPS. Compared to pixel-space models, LHGU achieves significant speed-up with less memory consumption. Although LHGU decodes latent into pictures at each step, it still has a greater advantage than processing directly in pixel space.

From Table~\ref{tab:function_optim}, we can observe that our history gradient update method significantly enhances performance, regardless of the evaluation function used, even for non-strong-convex functions such as LPIPS~\cite{johnson2016perceptual} and FFL~\cite{jiang2021focal}. This suggests that our history gradient update is a versatile method applicable to various scenarios. Similarly, from Fig.~\ref{fig:eval_func}, we can observe that in the case of using Improved-Momentum-variant, all evaluation functions achieve relatively stable optimization. In Fig.~\ref{fig:hgu_func}, we present a comparison between our history gradient update and the GD algorithm. It is evident that our algorithm provides a more stable optimization process and does not exhibit an increase in error during the later stages of optimization. This indicates that historical gradient information contributes to a more stable optimization of the data fidelity term.

\begin{figure*}[t]
	\begin{minipage}{0.5\textwidth}
		\centering
		\includegraphics[width=\linewidth]{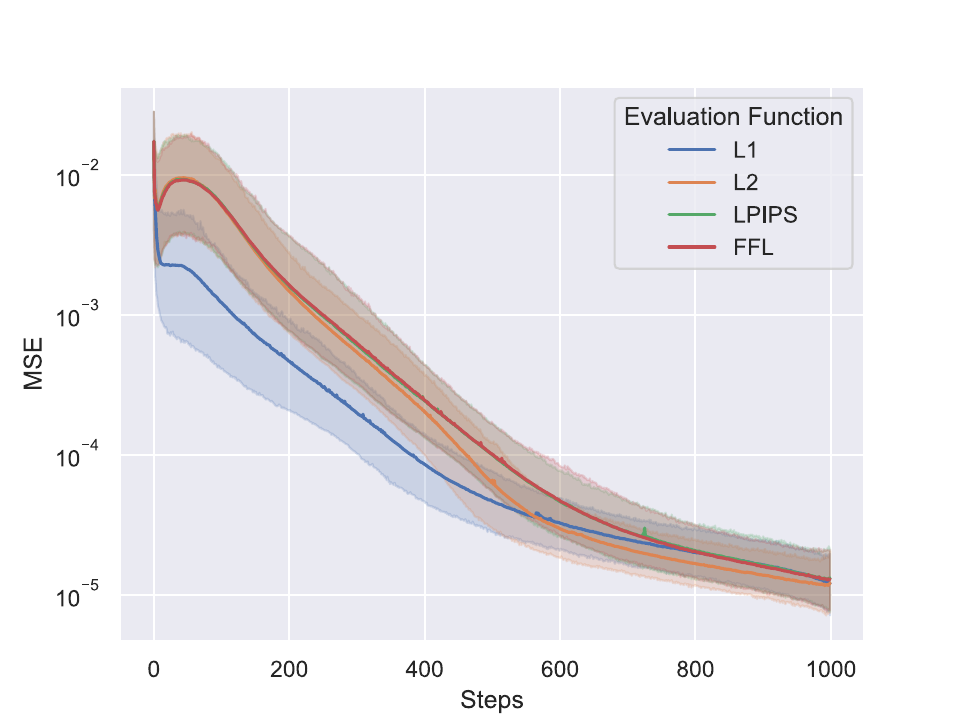}
		\caption{Errors of different evaluation functions using Improved-Momentum-variant LHGU in sparse-view CT reconstruction.}
		\label{fig:eval_func}
	\end{minipage}
	\begin{minipage}{0.5\textwidth}
		\centering
		\includegraphics[width=\linewidth]{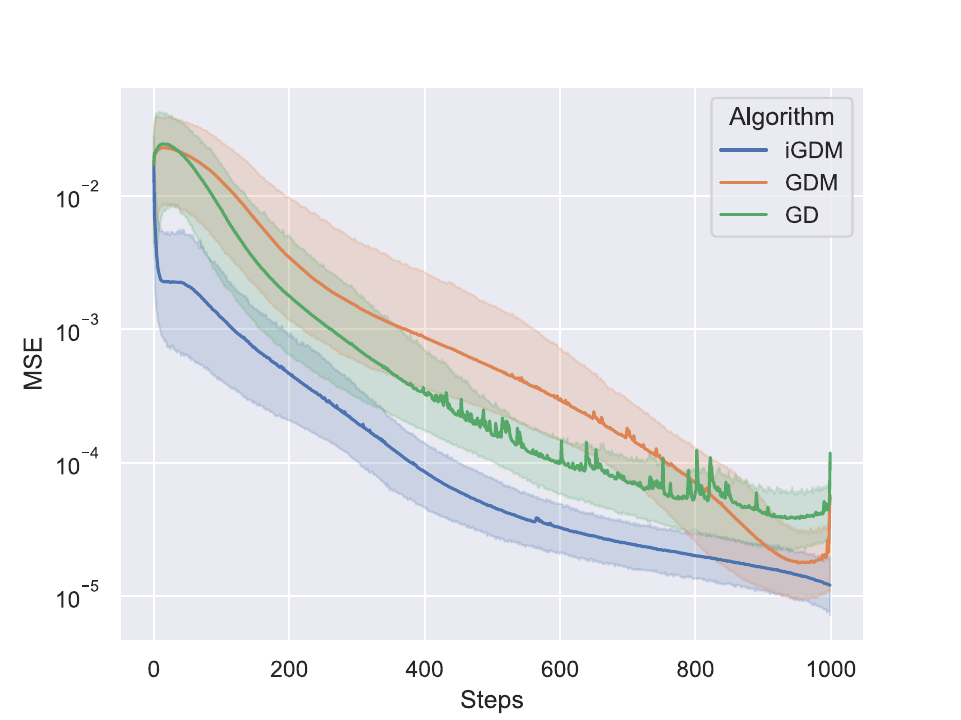}
		\caption{Errors of different gradient update algorithms in sparse-view CT reconstruction. The bands show $90\%$ percentile interval.}
		\label{fig:hgu_func}
	\end{minipage}
\end{figure*}

\begin{table*}[t]
	\centering
	\caption{Ablation evaluation (PSNR, SSIM) on the effect of evaluation function and history gradient update variants. We mark \textbf{bold} for the best and \underline{underline} for the second best.}
	\label{tab:function_optim}
		\begin{tabular}{@{}lcccccccccccc@{}}
			\toprule
			Evaluation function     & L1     & L1     & L1              & L2 & L2     & L2              & FFL    & FFL    & FFL          & LPIPS  & LPIPS  & LPIPS  \\
			\midrule
			Gradient update method  & GD     & GDM    & iGDM            & GD                & GDM    & iGDM            & GD     & GDM    & iGDM         & GD     & GDM    & iGDM   \\ \midrule
			PSNR $\uparrow$                   & 27.07  & 38.87  & {\ul 39.77}     & 32.23             & 32.36  & \textbf{39.86}  & 23.18  & 34.30  & 39.56        & 19.03  & 23.30  & 39.54  \\
			SSIM $\uparrow$                   & 0.8857 & 0.9551 & \textbf{0.9612} & 0.8933            & 0.8948 & \textbf{0.9612} & 0.7822 & 0.9009 & {\ul 0.9598} & 0.6144 & 0.6409 & 0.9597 \\ \bottomrule
		\end{tabular}%
\end{table*}


\begin{table*}[]
	\centering
	\caption{Hyper-parameters evaluation on the Improved-Momentum-variant history gradient update. We mark \textbf{bold} for the best.}
	\label{tab:adam}
	\begin{tabular}{@{}lccccccccc<{}}
		\toprule
		Model & Baseline & A      & B      & C      & D      & E      & F      & G      & H      \\ \midrule
		$\varepsilon$ &
		$1e^{-8}$ &
		\cellcolor[HTML]{EFEFEF}$1e^{-3}$ &
		\cellcolor[HTML]{EFEFEF}$1$ &
		$1e^{-8}$ &
		$1e^{-8}$ &
		$1e^{-8}$ &
		$1e^{-8}$ &
		$1e^{-8}$ &
		$1e^{-8}$ \\
		$\eta_1$ &
		$0.9$ &
		$0.9$ &
		$0.9$ &
		\cellcolor[HTML]{C0C0C0}$0.99$ &
		\cellcolor[HTML]{C0C0C0}$0.5$ &
		\cellcolor[HTML]{C0C0C0}$0.1$ &
		$0.9$ &
		$0.9$ &
		$0.9$ \\
		$\eta_2$ &
		$0.999$ &
		$0.999$ &
		$0.999$ &
		$0.999$ &
		$0.999$ &
		$0.999$ &
		\cellcolor[HTML]{9B9B9B}$0.9$ &
		\cellcolor[HTML]{9B9B9B}$0.5$ &
		\cellcolor[HTML]{9B9B9B}$0.1$ \\ \midrule
		PSNR  & \textbf{39.77}    & 39.74  & \textbf{39.77}  & 30.85  & 34.11  & 30.83  & 29.88  & 20.02  & 20.85  \\
		SSIM  & \textbf{0.9612 }  & 0.9609 & 0.9610 & 0.8465 & 0.9200 & 0.8720 & 0.8245 & 0.6252 & 0.6414 \\ \bottomrule
	\end{tabular}
\end{table*}
\subsection{Hyper-parameter studies}
To analyze the impact of the history gradient update on the entire reconstruction process, we conducted multiple experiments on the hyperparameters of its Improved-Momentum-variant. Table~\ref{tab:adam} presents the experimental results of adjusting the three hyperparameters, $\varepsilon, \eta_1, \eta_2$. From the results, it is evident that adjusting $\varepsilon$ does not significantly impact the final performance. However, modifications to $\eta_1$ and $\eta_2$ result in a substantial reduction in performance, a phenomenon reminiscent of using the Adam optimizer for neural network optimization. On the other hand, this phenomenon underscores the necessity of incorporating historical gradients into the optimization process. Proper hyperparameter selection can lead to a significant performance improvement.

\section{Conclusion}

In this paper, we introduce the History Gradient Update (HGU) as a powerful optimization tool for solving general inverse problems. We ensured the convergence of optimizing the data fidelity term using a gradient descent algorithm through theoretical derivation. Simultaneously, we demonstrated that incorporating historical gradient information into the optimization process accelerates its convergence. In addition, we generate the prior term in the latent space instead of the pixel space, which has proven its effectiveness on small-scale datasets and also leads to faster sampling speed. Our experimental results demonstrate that the latent diffusion solver with HGU outperforms state-of-the-art methods including the supervised method on sparse CT data reconstruction and achieves competitive results on nature image restoration. We believe that our work offers the community a promising tool for leveraging the rapidly growing field of latent diffusion models to restore high-quality and high-resolution data from degraded measurements with stable and fast optimizations.

\section*{Acknowledgments}
This work were supported in part by the Sichuan Science and Technology Program under Grant 2022JDJQO045 and Grant 2021JDJQ0024, in part by the National Natural Science Foundation of China under Grant 61871277 and Grant 62271335, in part by the Sichuan Health Commission Research Project under Grant 19PJ007, in part by the Chengdu Municipal Health Commission Research Project under Grant 2022053, in part by the Chengdu Key Research and development Support project under Grant 2021YF0501788SN.



%
\bibliographystyle{IEEEtran}
\bibliography{tmm}

\begin{thebibliography}{10}
\providecommand{\url}[1]{#1}
\csname url@samestyle\endcsname
\providecommand{\newblock}{\relax}
\providecommand{\bibinfo}[2]{#2}
\providecommand{\BIBentrySTDinterwordspacing}{\spaceskip=0pt\relax}
\providecommand{\BIBentryALTinterwordstretchfactor}{4}
\providecommand{\BIBentryALTinterwordspacing}{\spaceskip=\fontdimen2\font plus
\BIBentryALTinterwordstretchfactor\fontdimen3\font minus \fontdimen4\font\relax}
\providecommand{\BIBforeignlanguage}[2]{{%
\expandafter\ifx\csname l@#1\endcsname\relax
\typeout{** WARNING: IEEEtran.bst: No hyphenation pattern has been}%
\typeout{** loaded for the language `#1'. Using the pattern for}%
\typeout{** the default language instead.}%
\else
\language=\csname l@#1\endcsname
\fi
#2}}
\providecommand{\BIBdecl}{\relax}
\BIBdecl

\bibitem{bora2017compressed}
A.~Bora, A.~Jalal, E.~Price, and A.~G. Dimakis, ``Compressed sensing using generative models,'' in \emph{International Conference on Machine Learning}.\hskip 1em plus 0.5em minus 0.4em\relax PMLR, 2017, pp. 537--546.

\bibitem{goodfellow2014generative}
I.~Goodfellow, J.~Pouget-Abadie, M.~Mirza, B.~Xu, D.~Warde-Farley, S.~Ozair, A.~Courville, and Y.~Bengio, ``Generative adversarial nets,'' \emph{Advances in neural information processing systems}, vol.~27, 2014.

\bibitem{kingma2013auto}
D.~P. Kingma and M.~Welling, ``Auto-encoding variational bayes,'' \emph{arXiv preprint arXiv:1312.6114}, 2013.

\bibitem{ho2020denoising}
J.~Ho, A.~Jain, and P.~Abbeel, ``Denoising diffusion probabilistic models,'' \emph{Advances in Neural Information Processing Systems}, vol.~33, pp. 6840--6851, 2020.

\bibitem{chung2022improving}
\BIBentryALTinterwordspacing
H.~Chung, B.~Sim, D.~Ryu, and J.~C. Ye, ``Improving diffusion models for inverse problems using manifold constraints,'' in \emph{Advances in Neural Information Processing Systems}, A.~H. Oh, A.~Agarwal, D.~Belgrave, and K.~Cho, Eds., 2022. [Online]. Available: \url{https://openreview.net/forum?id=nJJjv0JDJju}
\BIBentrySTDinterwordspacing

\bibitem{song2021solving}
\BIBentryALTinterwordspacing
Y.~Song, L.~Shen, L.~Xing, and S.~Ermon, ``Solving inverse problems in medical imaging with score-based generative models,'' in \emph{International Conference on Learning Representations}, 2022. [Online]. Available: \url{https://openreview.net/forum?id=vaRCHVj0uGI}
\BIBentrySTDinterwordspacing

\bibitem{chung2022solving}
H.~Chung, D.~Ryu, M.~T. Mccann, M.~L. Klasky, and J.~C. Ye, ``Solving 3d inverse problems using pre-trained 2d diffusion models,'' \emph{IEEE/CVF Conference on Computer Vision and Pattern Recognition}, 2023.

\bibitem{chung2022diffusion}
\BIBentryALTinterwordspacing
H.~Chung, J.~Kim, M.~T. Mccann, M.~L. Klasky, and J.~C. Ye, ``Diffusion posterior sampling for general noisy inverse problems,'' in \emph{The Eleventh International Conference on Learning Representations}, 2023. [Online]. Available: \url{https://openreview.net/forum?id=OnD9zGAGT0k}
\BIBentrySTDinterwordspacing

\bibitem{Song2023PseudoinverseGuidedDM}
\BIBentryALTinterwordspacing
J.~Song, A.~Vahdat, M.~Mardani, and J.~Kautz, ``Pseudoinverse-guided diffusion models for inverse problems,'' in \emph{International Conference on Learning Representations}, 2023. [Online]. Available: \url{https://api.semanticscholar.org/CorpusID:259298715}
\BIBentrySTDinterwordspacing

\bibitem{lugmayr2022repaint}
A.~Lugmayr, M.~Danelljan, A.~Romero, F.~Yu, R.~Timofte, and L.~Van~Gool, ``Repaint: Inpainting using denoising diffusion probabilistic models,'' in \emph{Proceedings of the IEEE/CVF Conference on Computer Vision and Pattern Recognition}, 2022, pp. 11\,461--11\,471.

\bibitem{kawar2022denoising}
B.~Kawar, M.~Elad, S.~Ermon, and J.~Song, ``Denoising diffusion restoration models,'' in \emph{Advances in Neural Information Processing Systems}, 2022.

\bibitem{wang2022zero}
Y.~Wang, J.~Yu, and J.~Zhang, ``Zero-shot image restoration using denoising diffusion null-space model,'' in \emph{International Conference on Learning Representations}, 2023.

\bibitem{song2021scorebased}
\BIBentryALTinterwordspacing
Y.~Song, J.~Sohl-Dickstein, D.~P. Kingma, A.~Kumar, S.~Ermon, and B.~Poole, ``Score-based generative modeling through stochastic differential equations,'' in \emph{International Conference on Learning Representations}, 2021. [Online]. Available: \url{https://openreview.net/forum?id=PxTIG12RRHS}
\BIBentrySTDinterwordspacing

\bibitem{rombach2022high}
R.~Rombach, A.~Blattmann, D.~Lorenz, P.~Esser, and B.~Ommer, ``High-resolution image synthesis with latent diffusion models,'' in \emph{Proceedings of the IEEE/CVF Conference on Computer Vision and Pattern Recognition}, 2022, pp. 10\,684--10\,695.

\bibitem{saharia2022image}
C.~Saharia, J.~Ho, W.~Chan, T.~Salimans, D.~J. Fleet, and M.~Norouzi, ``Image super-resolution via iterative refinement,'' \emph{IEEE Transactions on Pattern Analysis and Machine Intelligence}, 2022.

\bibitem{gao2023implicit}
S.~Gao, X.~Liu, B.~Zeng, S.~Xu, Y.~Li, X.~Luo, J.~Liu, X.~Zhen, and B.~Zhang, ``Implicit diffusion models for continuous super-resolution,'' in \emph{Proceedings of the IEEE/CVF conference on computer vision and pattern recognition}, 2023.

\bibitem{luo2023image}
Z.~Luo, F.~K. Gustafsson, Z.~Zhao, J.~Sj{\"o}lund, and T.~B. Sch{\"o}n, ``Image restoration with mean-reverting stochastic differential equations,'' in \emph{International Conference on Machine Learning}, 2023.

\bibitem{chung2022parallel}
H.~Chung, J.~Kim, S.~Kim, and J.~C. Ye, ``Parallel diffusion models of operator and image for blind inverse problems,'' \emph{IEEE/CVF Conference on Computer Vision and Pattern Recognition}, 2023.

\bibitem{van2017neural}
A.~Van Den~Oord, O.~Vinyals \emph{et~al.}, ``Neural discrete representation learning,'' \emph{Advances in neural information processing systems}, vol.~30, 2017.

\bibitem{kingma2014adam}
D.~P. Kingma and J.~Ba, ``Adam: A method for stochastic optimization,'' \emph{arXiv preprint arXiv:1412.6980}, 2014.

\bibitem{Zinkevich2003OnlineCP}
\BIBentryALTinterwordspacing
M.~A. Zinkevich, ``Online convex programming and generalized infinitesimal gradient ascent,'' in \emph{International Conference on Machine Learning}, 2003. [Online]. Available: \url{https://api.semanticscholar.org/CorpusID:553962}
\BIBentrySTDinterwordspacing

\bibitem{sutskever2013importance}
I.~Sutskever, J.~Martens, G.~Dahl, and G.~Hinton, ``On the importance of initialization and momentum in deep learning,'' in \emph{International conference on machine learning}.\hskip 1em plus 0.5em minus 0.4em\relax PMLR, 2013, pp. 1139--1147.

\bibitem{song2023solving2}
B.~Song, S.~M. Kwon, Z.~Zhang, X.~Hu, Q.~Qu, and L.~Shen, ``Solving inverse problems with latent diffusion models via hard data consistency,'' \emph{arXiv preprint arXiv:2307.08123}, 2023.

\bibitem{zhang2021clear}
Y.~Zhang, D.~Hu, Q.~Zhao, G.~Quan, J.~Liu, Q.~Liu, Y.~Zhang, G.~Coatrieux, Y.~Chen, and H.~Yu, ``Clear: Comprehensive learning enabled adversarial reconstruction for subtle structure enhanced low-dose ct imaging,'' \emph{IEEE Transactions on Medical Imaging}, vol.~40, no.~11, pp. 3089--3101, 2021.

\bibitem{mccollough2017low}
C.~H. McCollough, A.~C. Bartley, R.~E. Carter, B.~Chen, T.~A. Drees, P.~Edwards, D.~R. Holmes~III, A.~E. Huang, F.~Khan, S.~Leng \emph{et~al.}, ``Low-dose ct for the detection and classification of metastatic liver lesions: results of the 2016 low dose ct grand challenge,'' \emph{Medical physics}, vol.~44, no.~10, pp. e339--e352, 2017.

\bibitem{liu2015faceattributes}
Z.~Liu, P.~Luo, X.~Wang, and X.~Tang, ``Deep learning face attributes in the wild,'' in \emph{Proceedings of International Conference on Computer Vision (ICCV)}, December 2015.

\bibitem{yu2015lsun}
F.~Yu, A.~Seff, Y.~Zhang, S.~Song, T.~Funkhouser, and J.~Xiao, ``Lsun: Construction of a large-scale image dataset using deep learning with humans in the loop,'' \emph{arXiv preprint arXiv:1506.03365}, 2015.

\bibitem{chan2016plug}
S.~H. Chan, X.~Wang, and O.~A. Elgendy, ``Plug-and-play admm for image restoration: Fixed-point convergence and applications,'' \emph{IEEE Transactions on Computational Imaging}, vol.~3, no.~1, pp. 84--98, 2016.

\bibitem{johnson2016perceptual}
J.~Johnson, A.~Alahi, and L.~Fei-Fei, ``Perceptual losses for real-time style transfer and super-resolution,'' in \emph{Computer Vision--ECCV 2016: 14th European Conference, Amsterdam, The Netherlands, October 11-14, 2016, Proceedings, Part II 14}.\hskip 1em plus 0.5em minus 0.4em\relax Springer, 2016, pp. 694--711.

\bibitem{jiang2021focal}
L.~Jiang, B.~Dai, W.~Wu, and C.~C. Loy, ``Focal frequency loss for image reconstruction and synthesis,'' in \emph{Proceedings of the IEEE/CVF International Conference on Computer Vision}, 2021, pp. 13\,919--13\,929.

\end{thebibliography}


%
%
%
%
%
%
%
%

\clearpage

\appendices

\section*{Proofs}
\setcounter{theorem}{1}
\begin{theorem}
	For the gradient descent algorithm used in~\cite{chung2022improving,chung2022diffusion}, we have the upper bound of GD as:
	\setcounter{equation}{13}
	\begin{equation}
		R(T) = \frac{1}{2\eps_T} D^2 + \frac{G^2}{2} \sum_{t=1}^{T} \eps_t^2.
	\end{equation}
	When $T \to \infty$, the GD algorithm is tend to convergence:
	\begin{equation}
		\lim_{T \to \infty} \frac{R(T)}{T} \leq \lim_{T \to \infty} \frac{1}{T} \left[\frac{1}{2\eps_T} D^2 + \frac{G^2}{2} \sum_{t=1}^{T} \eps_t^2\right] = 0.
	\end{equation}
	Considering $\eps_t$ is a function w.r.t $t$: $\eps_t = \epsilon(t)$, and a polynomial decay with a constant $C$ is employed: $\eps_t = C/t^n, n \geq 0$:
	\begin{equation}
		R(T) \leq D^2 \frac{T^n}{2 C} + \frac{G^2C}{2}\left(\frac{1}{1-n}T^{1-n} - \frac{n}{1-n}\right).
	\end{equation}
	Thus, $R(T) = \gO\left(T^{\max(n, 1-n)}\right)$. When $n=1/2$, $R(T)$ attain the optimal upper bound $\gO\left(T^{1/2}\right)$.
\end{theorem}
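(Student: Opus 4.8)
The plan is to carry out the standard online-convex-optimization regret analysis for projected (here, unprojected) gradient descent, following Zinkevich, applied to the sequence of data-fidelity terms $U_t$. First I would recall the update rule implicit in Eq.~\eqref{eq:lds} restricted to the fidelity term: $\z^{(t+1)} = \z^{(t)} - \eps_t \g_t$ with $\g_t = \nabla U_t(\z^{(t)})$. Fix the comparator $\z^* = \argmin_\z \sum_{t=1}^T U_t(\z)$. The workhorse is the one-step potential identity
\begin{equation}
\|\z^{(t+1)} - \z^*\|_2^2 = \|\z^{(t)} - \z^*\|_2^2 - 2\eps_t \langle \g_t, \z^{(t)} - \z^*\rangle + \eps_t^2 \|\g_t\|_2^2,
\end{equation}
which lets me solve for the inner product $\langle \g_t, \z^{(t)} - \z^*\rangle$. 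By convexity of $U_t$ (the subgradient inequality in Definition~1), $U_t(\z^{(t)}) - U_t(\z^*) \leq \langle \g_t, \z^{(t)} - \z^*\rangle$, so summing over $t = 1,\dots,T$ gives a telescoping bound
\begin{equation}
R(T) \leq \sum_{t=1}^T \frac{\|\z^{(t)}-\z^*\|_2^2 - \|\z^{(t+1)}-\z^*\|_2^2}{2\eps_t} + \frac{1}{2}\sum_{t=1}^T \eps_t \|\g_t\|_2^2.
\end{equation}

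Next I would bound the two pieces using Assumption~\ref{var_bounded}: every squared-distance term is at most $D^2$, and every $\|\g_t\|_2^2 \leq G^2$. Handling the first sum when $\eps_t$ varies is the one place requiring a little care; with the telescoping/Abel-summation trick, $\sum_t \big(\tfrac{1}{\eps_t} - \tfrac{1}{\eps_{t-1}}\big)D^2 \le \tfrac{D^2}{\eps_T}$ for nonincreasing $\eps_t$, and this is evidently what the stated clean bound $R(T) = \frac{1}{2\eps_T}D^2 + \frac{G^2}{2}\sum_t \eps_t^2$ intends (note the paper writes $\eps_t^2$ in the second term, so it is implicitly using a slightly looser grouping $\sum_t \tfrac{\eps_t}{2}\|\g_t\|^2 \le \tfrac{G^2}{2}\sum_t \eps_t$ versus $\eps_t^2$ — I would either match the paper's convention or flag the exponent; for the asymptotics it does not matter). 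Dividing by $T$: the first term is $O(1/(T\eps_T))$ and the second is $O\!\big(\tfrac{1}{T}\sum_t \eps_t\big)$, both of which go to $0$ for any sequence with $\eps_t \to 0$ but $\sum_t \eps_t = \infty$ (e.g.\ $\eps_t \to 0$ polynomially slower than $1/t$ makes the first $\to 0$ too, since $T\eps_T \to \infty$), giving $R(T)/T \to 0$.

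Finally, plugging in the polynomial schedule $\eps_t = C/t^n$: $\tfrac{1}{2\eps_T}D^2 = \tfrac{D^2 T^n}{2C}$, and $\tfrac{G^2}{2}\sum_{t=1}^T (C/t^n)^{(\cdot)}$ is estimated by comparison with an integral, $\sum_{t=1}^T t^{-n'} \le \int_0^T s^{-n'}\,ds$-type bounds, yielding the $\frac{1}{1-n}T^{1-n} - \frac{n}{1-n}$ expression (for $0 \le n < 1$; the $n=1$ edge case gives a $\log T$ which I would mention or exclude). Summing the two contributions gives $R(T) = O(T^n) + O(T^{1-n}) = O(T^{\max(n,1-n)})$, minimized at $n = 1/2$ with value $O(T^{1/2})$.

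I expect no deep obstacle here — this is textbook OCO analysis. The only real subtleties are (i) matching the paper's exact constant/exponent bookkeeping in the second term (whether it should read $\eps_t$ or $\eps_t^2$), and (ii) being careful that the "unprojected" GD here stays valid without a compactness/projection step: this is exactly why Assumption~\ref{var_bounded} is invoked to assert $\|\z^{(t)} - \z^*\|_2 \le D$ directly, which I would state explicitly as the substitute for the usual bounded-domain hypothesis in Zinkevich's theorem.
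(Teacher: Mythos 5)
Your proposal is correct and follows essentially the same route as the paper's own proof: the Zinkevich-style regret analysis via the one-step potential identity, the convexity (subgradient) inequality, the Abel/telescoping treatment of the $\frac{1}{2\eps_t}$ terms under the bounded-iterate assumption, and the integral comparison for the polynomial schedule $\eps_t = C/t^n$. The $\eps_t$ versus $\eps_t^2$ bookkeeping issue you flag is indeed present in the paper's statement (its proof of the second term and the subsequent polynomial bound actually use $\sum_t \eps_t$), and as you note it does not affect the $\gO\left(T^{\max(n,1-n)}\right)$ conclusion.
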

\begin{proof}
	\setcounter{equation}{26}
	Considering $\z^* = \argmin_\z \sum_{t=1}^{T} U_t (\z)$, we have:
	\begin{align}
		R(T) &= \sum_{t=1}^T U_t \left(\z^{(t)}\right) - \min_\z \sum_{t=1}^T U_t (\z) \\
		     &= \sum_{t=1}^T U_t \left(\z^{(t)}\right) - \sum_{t=1}^T U_t (\z^*) \\
		     &= \sum_{t=1}^T \left[\left(\z^{(t)}\right) - U_t (\z^*) \right],
	\end{align}
	because $U_t(\z)$ is a convex function, we have:
	\begin{align}
		& U_t (\z^*) \geq U_t \left(\z^{(t)}\right) + \left\langle \g_t, \z^{*} - \z^{(t)}\right\rangle \\
		\Longrightarrow & U_t \left(\z^{(t)}\right) - U_t (\z^*) \leq \left\langle \g_t,\z^{(t)} - \z^{*}\right\rangle.
	\end{align}
	Hence, we can substitute the above expression into $R(T)$:
	\begin{equation}
		R(T) \leq \sum_{t=1}^{T} \left\langle \g_t,\z^{(t)} - \z^{*}\right\rangle.
	\end{equation}
	For the GD algorithm, we have:
	\begin{align}
						\z^{(t+1)}  = & \z^{(t)} - \eps_t \g_t \\
		\Longrightarrow \z^{(t+1)} - \z^*  = & \z^{(t)} - \z^* - \eps_t \g_t \\
		\Longrightarrow \left\|\z^{(t+1)} - \z^*\right\|_2^2  = & \left\|\z^{(t)} - \z^* - \eps_t \g_t\right\|_2^2 \\
		\Longrightarrow \left\|\z^{(t+1)} - \z^*\right\|_2^2  = & \left\|\z^{(t)} - \z^*\right\|_2^2 - 2 \eps_t \left\langle \g_t,\z^{(t)} - \z^{*}\right\rangle \nonumber \\
		& + \eps_t^2 \|\g_t\|_2^2 \\
		\Longrightarrow \left\langle \g_t,\z^{(t)} - \z^{*}\right\rangle = & \frac{1}{2\eps_t}\left[\left\|\z^{(t)} - \z^*\right\|_2^2 - \left\|\z^{(t+1)} - \z^*\right\|_2^2 \right] \nonumber \\
		& + \frac{\eps_t}{2} \|\g_t\|_2^2.
	\end{align}
	Thus, the upper bound of $R(T)$ can be:
	\begin{align}
		R(T) & \leq \sum_{i=1}^T \frac{1}{2\eps_t}\left[\left\|\z^{(t)} - \z^*\right\|_2^2 - \left\|\z^{(t+1)} - \z^*\right\|_2^2 \right] + \frac{\eps_t}{2} \|\g_t\|_2^2 \\
			&=\underbrace{\sum_{i=1}^T \frac{1}{2\eps_t}\left[\left\|\z^{(t)} - \z^*\right\|_2^2 - \left\|\z^{(t+1)} - \z^*\right\|_2^2 \right]}_{(1)} \nonumber \\
			&+\underbrace{\sum_{i=1}^T \frac{\eps_t}{2} \|\g_t\|_2^2}_{(2)}. \label{eq:upper_bound_group}
	\end{align}
	For the $(1)$ in Eq.~\ref{eq:upper_bound_group}, we have:
	\begin{align}
		& \sum_{i=1}^T \frac{1}{2\eps_t}\left[\left\|\z^{(t)} - \z^*\right\|_2^2 - \left\|\z^{(t+1)} - \z^*\right\|_2^2 \right] \\
	   =& \frac{1}{2\eps_1}\left\| \z^{(1)} - \z^* \right\|_2^2 - \frac{1}{2\eps_1}\left\| \z^{(2)} - \z^* \right\|_2^2 + \\
	    & \frac{1}{2\eps_2}\left\| \z^{(2)} - \z^* \right\|_2^2 - \frac{1}{2\eps_2}\left\| \z^{(3)} - \z^* \right\|_2^2 + \cdots + \\
	    & \frac{1}{2\eps_T}\left\| \z^{(T)} - \z^* \right\|_2^2 - \frac{1}{2\eps_T}\left\| \z^{(T+1)} - \z^* \right\|_2^2 \\
	   =& \frac{1}{2\eps_1}\left\| \z^{(1)} - \z^* \right\|_2^2 + \sum_{t=2}^T\left(\frac{1}{2\eps_t} - \frac{1}{2\eps_{t-1}}\right)\left\|\z^{(t)} - \z^*\right\|_2^2 \\
	   &- \frac{1}{2\eps_T}\left\| \z^{(T+1)} - \z^* \right\|_2^2.
	\end{align}
	Since the latent variable $\z$ is bounded, we have:
	\begin{equation}
		\frac{1}{2\eps_1} \left\| \z^{(1)} - \z^* \right\|_2^2 \leq \frac{1}{2\eps_1} D^2.
	\end{equation}
	Since $\left\lbrace \eps_t\right\rbrace $ is monotonically non-decreasing and the latent variable $\z$ is bounded, we have:
	\begin{equation}
		\sum_{t=2}^T \left(\frac{1}{2\eps_t} - \frac{1}{2\eps_{t-1}}\right) \left\|\z^{(t)} - \z^*\right\|_2^2 \leq \sum_{t=2}^T \left(\frac{1}{2\eps_t} - \frac{1}{2\eps_{t-1}}\right) D^2.
	\end{equation}
	Clearly, $- \frac{1}{2\eps_T}\left\| \z^{(T+1)} - \z^* \right\|_2^2$ is less than or equal to 0. Therefore, we can rescale equation $(1)$ as:
	\begin{align}
		& \sum_{i=1}^T \frac{1}{2\eps_t}\left[\left\|\z^{(t)} - \z^*\right\|_2^2 - \left\|\z^{(t+1)} - \z^*\right\|_2^2 \right] \\
   \leq & \frac{1}{2\eps_1} D^2 + \sum_{t=2}^T \left(\frac{1}{2\eps_t} - \frac{1}{2\eps_{t-1}}\right) D^2 + 0 = \frac{1}{2\eps_T}D^2.
	\end{align}
	And for $(2)$, considering the gradient $\g$ is bounded, we have:
	\begin{equation}
		\sum_{i=1}^T \frac{\eps_t}{2} \|\g_t\|_2^2 \leq \sum_{i=1}^T \frac{\eps_t}{2} G^2 = \frac{G^2}{2} \sum_{t=1}^T \eps_t^2.
	\end{equation}
	Thus, we have the upper bound for $R(T)$:
	\begin{equation}
		R(T) = \frac{1}{2\eps_T} D^2 + \frac{G^2}{2} \sum_{t=1}^{T} \eps_t^2.
	\end{equation}
	Clearly, when $T \to \infty$, the above equation is tend to $0$:
	\begin{equation}
		\lim_{T \to \infty} \frac{R(T)}{T} \leq \lim_{T \to \infty} \frac{1}{T} \left[\frac{1}{2\eps_T} D^2 + \frac{G^2}{2} \sum_{t=1}^{T} \eps_t^2\right] = 0.
	\end{equation}
	Thus, we can conclude that the GD algorithm is tend to convergence when $T \to \infty$. Considering $\eps_t$ is a function w.r.t $t$: $\eps_t = \epsilon(t)$, and a polynomial decay with a constant $C$ is employed: $\eps_t = C/t^n, n \geq 0$:
	\begin{align}
		R(T) &\leq \frac{T^p}{2C} D^2 + \frac{G^2}{2} \sum_{t=1}^{T} \frac{C}{t^p} \\
		     &\leq \frac{T^p}{2C} D^2 + \frac{G^2C}{2} \left(1 + \int_{1}^{T}\frac{dt}{t^p}\right) \\
		     &= \frac{T^p}{2C} D^2 + \frac{G^2C}{2} \left(\frac{1}{1-p}T^{1-p} - \frac{p}{1-p}\right).
	\end{align}
	Thus, $R(T) = \gO\left(T^{\max(n, 1-n)}\right)$. When $n=1/2$, $R(T)$ attain the optimal upper bound $\gO\left(T^{1/2}\right)$.	
\end{proof}

\begin{theorem}
	\setcounter{equation}{17}
	For the Improved-Momentum-variant history gradient update, we have the upper bound as:
	\begin{align}
		& R(T) \leq \frac{\sum_{i=1}^{d}D^2_i G_i}{2\eps_T(1-\eta_{1,1})} + \left(\sum_{i=1}^{d}D_iG_i\right)\left(\sum_{t=1}^{T}\frac{\eta_{1,t}}{1-\eta_{1,t}}\right) + \nonumber \\
		& \left(\sum_{i=1}^{d}G_i\right)\left[\sum_{t=1}^{T}\frac{\gamma_t}{2(1-\eta_{1,t})} \cdot \sum_{s=1}^{t}\frac{(1-\eta_{1,s})^2\left(\prod_{r=s+1}^t\eta_{1,r}\right)^2}{(1-\eta_2)\eta_2^{t-s}}\right],
	\end{align}
	where $\gamma_t = \frac{\eps_t}{1 - \prod_{s=1}^t \eta_{1,s}}$. Similarly, 	When $T \to \infty$, this Improved-Momentum-variant history gradient update algorithm is tend to convergence. Considering $\eta_{1,t} \in (0, 1), \forall t, \eta_{1,1} \geq \eta_{1,2} \geq \dots \geq \eta_{1,T} \geq \dots$ and $\eta_2 \in (0,1), \frac{\eta_{1,t}}{\sqrt{\eta_2}}\leq\sqrt{c} < 1$, we have:
	\begin{align}
		& R(T) \leq \frac{\sum_{i=1}^{d}D^2_i G_i}{2\eps_T(1-\eta_{1,1})} + \left(\sum_{i=1}^{d}D_iG_i\right)\left(\sum_{t=1}^{T}\eta_{1,t}\right) + \nonumber \\
		& \left(\sum_{i=1}^{d}G_i\right)\left[\frac{\sum_{t=1}^{T}\eps_t}{2(1-\eta_{1,1})^2(1-\eta_2)(1-c)}\right].
	\end{align}
	Similar to Theorem~\ref{proofofgd}, when $n=1/2$, $R(T)$ attain the optimal upper bound $\gO\left(T^{1/2}\right)$.
\end{theorem}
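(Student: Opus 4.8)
The plan is to mirror the structure of the proof of Theorem~\ref{proofofgd}, but with the gradient descent update replaced by the first/second-moment update $z^{(t+1)}_i = z^{(t)}_i - \gamma_t \hat m_i^{(t)} / \sqrt{\hat v_i^{(t)}}$, following the regret analysis of Adam-type methods (Kingma--Ba, Reddi et al.). First I would use convexity of $U_t$ to bound $R(T) \leq \sum_{t=1}^T \langle \g_t, \z^{(t)} - \z^*\rangle = \sum_{i=1}^d \sum_{t=1}^T g_{t,i}(z^{(t)}_i - z^*_i)$, reducing everything to a coordinatewise analysis. Then for each coordinate $i$ I would expand $\|z^{(t+1)}_i - z^*_i\|^2$ from the update rule, solve for the term $g_{t,i}(z^{(t)}_i - z^*_i)$, and sum over $t$ with Abel summation on the $1/\gamma_t$-weighted telescoping sum (exactly as in the GD proof). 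The momentum $\hat m_i^{(t)}$ brings in the cross terms with the previous momentum; these are handled by writing $\hat m_i^{(t)}$ as the convex combination $\eta_{1,t} m_i^{(t-1)} + (1-\eta_{1,t}) g_{t,i}$ and applying Young's / Cauchy--Schwarz to split $\langle m^{(t-1)}, z^{(t)}-z^*\rangle$ into a bounded-diameter piece (giving the $\sum_t \eta_{1,t}/(1-\eta_{1,t})$ term) and a gradient-squared piece.

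The three resulting groups of terms are exactly the three summands in the claimed bound: the telescoping $1/\gamma_T$ term gives $\sum_i D_i^2 G_i / (2\eps_T(1-\eta_{1,1}))$ after using $\|\z-\z'\|_\infty \le \|\z - \z'\|_2 \le D$ coordinatewise (so $D_i \le D$), boundedness $|g_{t,i}| \le G_i \le G$, and the lower bound $\sqrt{\hat v_i^{(t)}} \ge$ (something controlled by $G_i$) to convert the $1/\sqrt{\hat v}$ factors into the stated form; the momentum cross-term gives the middle summand; and the ``gradient energy'' term, after substituting $\hat m_i^{(t)} = \frac{1-\eta_1}{1-\eta_1^t}\sum_{s\le t}\eta_1^{t-s}g_s$ and $\hat v_i^{(t)} = \frac{1-\eta_2}{1-\eta_2^t}\sum_{s\le t}\eta_2^{t-s}g_s^2$ and expanding $(\hat m)^2/\sqrt{\hat v}$, produces the double sum $\sum_{t}\frac{\gamma_t}{2(1-\eta_{1,t})}\sum_{s\le t}\frac{(1-\eta_{1,s})^2(\prod_{r=s+1}^t \eta_{1,r})^2}{(1-\eta_2)\eta_2^{t-s}}$. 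For the simplified bound I would then impose $\eta_{1,t}$ non-increasing and $\eta_{1,t}/\sqrt{\eta_2} \le \sqrt c < 1$: the inner sum becomes a geometric series in $c$ bounded by $1/(1-c)$, the $\prod \eta_{1,r}$ factors are bounded using $\eta_{1,r}\le\eta_{1,1}$, and $\gamma_t \le \eps_t/(1-\eta_{1,1})$, collapsing the double sum into $\frac{\sum_t \eps_t}{2(1-\eta_{1,1})^2(1-\eta_2)(1-c)}$; likewise $\sum_t \eta_{1,t}/(1-\eta_{1,t}) \le \frac{1}{1-\eta_{1,1}}\sum_t\eta_{1,t}$, though the paper's stated form absorbs the $1/(1-\eta_{1,1})$ differently, so I would check the constant bookkeeping against their statement. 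Finally, the $\eps_t = C/t^n$ substitution and the integral comparison $\sum_t \eps_t \le C(1 + \int_1^T t^{-n}dt)$ give $R(T) = \gO(T^{\max(n,1-n)})$, optimized at $n = 1/2$, identically to Theorem~\ref{proofofgd}.

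\textbf{Main obstacle.} The delicate part is controlling the interaction between the momentum term $\hat m^{(t)}$ and the adaptive denominator $\sqrt{\hat v^{(t)}}$: unlike plain GD, the ``effective step'' $\gamma_t \hat m^{(t)}/\sqrt{\hat v^{(t)}}$ is not a clean multiple of $g_t$, so the telescoping argument on $\|z^{(t)}_i - z^*_i\|^2$ must be set up with the $1/\gamma_t$-weighting \emph{and} carry along a $\sqrt{\hat v_i^{(t)}}$ factor, and one must lower-bound $\sqrt{\hat v_i^{(t)}}$ without it being degenerate — this is exactly why Assumption~\ref{var_bounded} (bounded gradients, hence $\hat v$ controlled above and, via $g_s^2$ in recent terms, plausibly below) and the condition $\eta_{1,t}/\sqrt{\eta_2} \le \sqrt c < 1$ (known from the Reddi et al. correction to the original Adam proof) are needed. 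I would expect the bulk of the write-up to be the careful Abel-summation/Young's-inequality manipulation of the momentum cross-terms and the geometric-series bounding of the triple-nested sum; the convexity reduction and the final $\eps_t$-decay calculation are routine and identical to Theorem~\ref{proofofgd}.
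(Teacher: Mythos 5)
Your proposal follows essentially the same route as the paper's proof: the convexity reduction to $\sum_i\sum_t g_{t,i}(z_i^{(t)}-z_i^*)$, the coordinatewise expansion of the squared distance under the $\gamma_t\,\hat m/\sqrt{\hat v}$ update yielding exactly the three summands (weighted telescoping, momentum cross-term, $(m^{(t)})^2/\sqrt{\hat v^{(t)}}$ energy term), the closed-form momentum expansion with a Cauchy--Schwarz split producing the nested sum, and the geometric-series collapse under $\eta_{1,t}/\sqrt{\eta_2}\le\sqrt{c}<1$ followed by the same $\eps_t=C/t^n$ rate argument. One small note: the paper never needs a lower bound on $\sqrt{\hat v_i^{(t)}}$ as you anticipated; in the energy term the denominator cancels against the $v_i^{(t)}$ factor from Cauchy--Schwarz (leaving $\sqrt{v_i^{(t)}}\le G_i$), and in the telescoping term only the upper bound $\hat v_i^{(t)}\le G_i^2$ together with monotonicity of $\sqrt{\hat v_i^{(t)}}/\eps_t$ is used.
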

\begin{proof}
	\setcounter{equation}{55}
	For the Improved-Momentum-variant algorithm, we have:
	\begin{align}
		\m^{(t)} &= \eta_1 \m^{(t-1)} + (1-\eta_1)\g_t, \hat{\m}^{(t)} = \frac{\m^{(t)}}{1-\eta_1^t}, \m^{(0)} = 0, \\
		\vbb^{(t)} &= \eta_2 \vbb^{(t-1)} + (1-\eta_2)\g_t^2, \hat{\vbb}^{(t)} = \frac{\vbb^{(t)}}{1-\eta_2^t},\vbb^{(0)}=0, \\
		\z^{(t+1)} &= \z^{(t)} - \eps_t \frac{\hat{\m}^{(t)}}{\sqrt{\hat{\vbb}^{(t)}}}, \; \g_t = \nabla U_t(\z^{(t)}).
	\end{align}
	We can calculate its closed-form solution based on the iterative equation for $\m^{(t)}$ as:
	\begin{align}
		\m^{(t)} =& \eta_1\m^{(t-1)} + (1-\eta_1)\g_t \\
		         =& \eta_1^2\m^{(t-2)} + \eta_1(1-\eta_1)\g_{t-1} + (1-\eta_1)\g_t \\
		         =& \eta_1^3\m^{(t-3)} + \eta_1^2(1-\eta_1)\g_{t-2} + \eta_1(1-\eta_1)\g_{t-1} + \nonumber \\ & (1-\eta_1)\g_t \\
		         =& \eta_1^t\m^{(0)} + \eta_1^{t-1}(1-\eta_1)\g_1 + \cdots + \eta_1^2(1-\eta_1)\g_{t-2} + \nonumber \\ & \eta_1(1-\eta_1)\g_{t-1} + (1-\eta_1)\g_t \\
		         \overset{\text{(a)}}{=} & \eta_1^{t-1}(1-\eta_1)\g_1 + \cdots + \eta_1^2(1-\eta_1)\g_{t-2} + \nonumber \\ & \eta_1(1-\eta_1)\g_{t-1} + (1-\eta_1)\g_t \\
		         =& (1-\eta_1)\sum_{s=1}^{t}\eta_1^{t-s}\g_s.
	\end{align}
	Where (a) holds because $\m^{(0)}=0$. Similarly, we have:
	\begin{equation}
		\vbb^{(t)} = (1-\eta_2)\sum_{s=1}^{t}\eta_2^{t-s}\g_s^2.
	\end{equation}
	For $\Ed\left[\m^{(t)}\right]$ and $\Ed\left[\vbb^{(t)}\right]$, we have:
	\begin{align}
		& \Ed\left[\m^{(t)}\right] = (1-\eta_1)\sum_{s=1}^{t}\eta_1^{t-s} \Ed \left[\g_s\right] = \Ed \left[\g_s\right] \cdot (1-\eta_1^t), \\ 
		&\Ed\left[\vbb^{(t)}\right] = (1-\eta_2)\sum_{s=1}^{t}\eta_2^{t-s} \Ed \left[\g_s^2\right] = \Ed \left[\g_s^2\right] \cdot (1-\eta_2^t).
	\end{align}
	In order to make $\Ed\left[\m^{(t)}\right]=\Ed \left[\g_s\right]$ and $\Ed\left[\vbb^{(t)}\right] = \Ed \left[\g_s^2\right]$, we need to make the following corrections:
	\begin{align}
		\m^{(t)} &\rightarrow \hat{\m}^{(t)} = \frac{\m^{(t)}}{1-\eta_1^t}, \\
		\vbb^{(t)} &\rightarrow  \hat{\vbb}^{(t)} = \frac{\vbb^{(t)}}{1-\eta_2^t}.
	\end{align}
	We break down the upper bound of $R(T)$ into individual dimensions of the variables as:
	\begin{align}
		R(T) \leq& \sum_{t=1}^{T} \left\langle \g_t,\z^{(t)} - \z^{*}\right\rangle \\
			 	=& \sum_{t=1}^{T} \sum_{i=1}^{d} g_{t,i}\left(z_i^{(t)} - z_i^*\right) \\
			 	=& \sum_{i=1}^{d} \sum_{t=1}^{T} g_{t,i}\left(z_i^{(t)} - z_i^*\right).
	\end{align}
	For any dimension $i$ of the variable, we have:
	\begin{align}
		z_i^{(t+1)} &= z_i^{(t+1)} - \eps_t \frac{\hat{m}^{(t)}_i}{\sqrt{\hat{v}^{(t)}_i}} \\
					&= z_i^{(t+1)} - \eps_t \frac{1}{1-\eta_1^t}\frac{m^{(t)}_i}{\sqrt{\hat{v}_i^{(t)}}} \\
					&= z_i^{(t+1)} - \eps_t\frac{1}{1-\eta_1^t}\frac{\eta_1 m_i^{(t-1)} + (1-\eta_1)g_{t,i}}{\sqrt{\hat{v}_i^{(t)}}}.
	\end{align}
	Therefore, according to~\cite{kingma2014adam}, we define $\eta_1=\eta_{1,t}$, allowing $\eta_1$ to change with an increase in the number of iterations, and $\eta_{1,t}$ is non-decreasing. This implies that momentum gradually diminishes, and eventually, $m_i$ approaches $g_i$. Such that, $z_i^{(t+1)}$ changes to:
	\begin{align}
	&	z_i^{(t+1)} = z_i^{(t)} - \gamma_t \frac{\eta_{1,t}m_i^{t-1}+\left(1-\eta_{1,t}\right)g_{t,i}}{\sqrt{\hat{v}_i^{(t)}}} \\
		&\Longrightarrow \left(z_i^{(t+1)} - z_i^*\right)^2 \nonumber \\
		&= \left[\left( z_i^{(t)}  - z_i^*\right) - \gamma_t \frac{\eta_{1,t}m_i^{t-1}+\left(1-\eta_{1,t}\right)g_{t,i}}{\sqrt{\hat{v}_i^{(t)}}}\right]^2 \\
		&\Longrightarrow 2\gamma_t \frac{\eta_{1,t}m_i^{t-1}+\left(1-\eta_{1,t}\right)g_{t,i}}{\sqrt{\hat{v}_i^{(t)}}}\left( z_i^{(t)}  - z_i^*\right) = \nonumber \\
		& \left( z_i^{(t)}  - z_i^*\right)^2 - \left(z_i^{(t+1)} - z_i^*\right)^2 \nonumber \\
		&+\gamma_t \frac{\left[ \eta_{1,t}m_i^{t-1} +  \left(1-\eta_{1,t}\right)g_{t,i}\right] ^2}{\sqrt{\hat{v}_i^{(t)}}}.
	\end{align}
	Here, $\gamma_t = \eps_t \frac{1}{1 - \prod_{s=1}^{t}\eta_{1,s}}$. Considering $m^{(t)}_i=\eta_{1,t}m^{(t-1)} + (1-\eta_{1,t})g_{t,i}$, we have:
	\begin{align}
		& g_{t,i}\left(z^{(t)}_i - z^*_i\right) = \underbrace{\frac{\sqrt{\hat{v}_i^{(t)}}\left[\left( z_i^{(t)}  - z_i^*\right)^2 - \left(z_i^{(t+1)} - z_i^*\right)^2\right]}{2\gamma_t(1-\eta_{1,t})}}_{(1)} \nonumber \\
		& - \underbrace{\frac{\eta_{1,t}}{1 - \eta_{1,t}}m_i^{(t-1)}\left( z_i^{(t)}  - z_i^*\right)}_{(2)} + \underbrace{\frac{\gamma_t}{2(1-\eta_{1,t})}\frac{\left(m_i^{(t)}\right)^2}{\sqrt{\hat{v}_i^{(t)}}}}_{(3)}.
		\label{eq:adam_item}
	\end{align}
	In order to get the upper bound of $R(T)$, we need to rescale $(1)$, $(2)$, and $(3)$ in the Eq.~\ref{eq:adam_item}. For $(1)$, we have:
	\begin{align}
		&\sum_{t=1}^{T} \frac{\sqrt{\hat{v}_i^{(t)}}\left[\left( z_i^{(t)}  - z_i^*\right)^2 - \left(z_i^{(t+1)} - z_i^*\right)^2\right]}{2\gamma_t(1-\eta_{1,t})} \\
	   =&\sum_{t=1}^{T} \frac{\sqrt{\hat{v}_i^{(t)}}\left[\left( z_i^{(t)}  - z_i^*\right)^2 - \left(z_i^{(t+1)} - z_i^*\right)^2\right]}{2  \eps_t \frac{1}{1 - \prod_{s=1}^{t}\eta_{1,s}} (1-\eta_{1,t})} \\
	   =&\sum_{t=1}^{T} \frac{\sqrt{\hat{v}_i^{(t)}}\left[\left( z_i^{(t)}  - z_i^*\right)^2 - \left(z_i^{(t+1)} - z_i^*\right)^2\right]\left(1 - \prod_{s=1}^{t}\eta_{1,s}\right)}{2  \eps_t (1-\eta_{1,t})} \\
	   \leq&\sum_{t=1}^{T} \frac{\sqrt{\hat{v}_i^{(t)}}\left[\left( z_i^{(t)}  - z_i^*\right)^2 - \left(z_i^{(t+1)} - z_i^*\right)^2\right]}{2\eps_t(1-\eta_{1,1})}.
	\end{align}
	We obtain the result using permutation recombination and summation:
	\begin{align}
		&\sum_{t=1}^{T} \frac{\sqrt{\hat{v}_i^{(t)}}\left[\left( z_i^{(t)}  - z_i^*\right)^2 - \left(z_i^{(t+1)} - z_i^*\right)^2\right]}{2\eps_t(1-\eta_{1,1})} \\
		=&\sum_{t=1}^{T} \frac{\sqrt{\hat{v}_i^{(t)}}\left(z_i^{(t)} - z_i^* \right)^2}{2\eps_t(1-\eta_{1,1})} - \frac{\sqrt{\hat{v}_i^{(t)}}\left(z_i^{(t+1)} - z_i^* \right)^2}{2\eps_t(1-\eta_{1,1})} \\
		=&\frac{\sqrt{\hat{v}_i^{(1)}}\left(z_i^{(1)} - z_i^* \right)^2}{2\eps_1 (1-\eta_{1,1})} - \frac{\sqrt{\hat{v}_i^{(T)}}\left(z_i^{(T+1)} - z_i^* \right)^2}{2\eps_T(1-\eta_{1,1})} + \nonumber \\
		& \sum_{t=2}^{T} \left(z_i^{(t)} - z_i^* \right)^2 \cdot \left[\frac{\sqrt{\hat{v}_i^{(t)}}}{2\eps_t(1-\eta_{1,1})} - \frac{\sqrt{\hat{v}_i^{(t-1)}}}{2\eps_{t-1}(1-\eta_{1,1})}\right].
	\end{align}
	We mainly focus on the last term. Considering $\frac{\sqrt{\hat{v}_i^{(t)}}}{2\eps_t(1-\eta_{1,1})} \geq \frac{\sqrt{\hat{v}_i^{(t-1)}}}{2\eps_{t-1}(1-\eta_{1,1})}, \forall t$, we have:
	\begin{align}
		& \sum_{t=2}^{T} \left(z_i^{(t)} - z_i^* \right)^2 \cdot \left[\frac{\sqrt{\hat{v}_i^{(t)}}}{2\eps_t(1-\eta_{1,1})} - \frac{\sqrt{\hat{v}_i^{(t-1)}}}{2\eps_{t-1}(1-\eta_{1,1})}\right] \\
	\leq& \sum_{t=2}^{T} D^2_i \cdot \left[\frac{\sqrt{\hat{v}_i^{(t)}}}{2\eps_t(1-\eta_{1,1})} - \frac{\sqrt{\hat{v}_i^{(t-1)}}}{2\eps_{t-1}(1-\eta_{1,1})}\right] \\
	   =& D_i^2 \left[\frac{\sqrt{\hat{v}_i^{(T)}}}{2\eps_T(1-\eta_{1,1})} - \frac{\sqrt{\hat{v}_i^{(1)}}}{2\eps_{1}(1-\eta_{1,1})}\right].
	\end{align}
	Because $\frac{\sqrt{\hat{v}_i^{(1)}}\left(z_i^{(1)} - z_i^* \right)^2}{2\eps_1 (1-\eta_{1,1})} \leq \frac{D^2_i \sqrt{\hat{v}_i^{(1)}}}{2\eps_1(1-\eta_{1,1})}, \frac{\sqrt{\hat{v}_i^{(T)}}\left(z_i^{(T+1)} - z_i^* \right)^2}{2\eps_T(1-\eta_{1,1})} \leq 0$, we can rescale $(1)$ to:
	\begin{align}
		& \sum_{t=1}^{T} \frac{\sqrt{\hat{v}_i^{(t)}}\left[\left( z_i^{(t)}  - z_i^*\right)^2 - \left(z_i^{(t+1)} - z_i^*\right)^2\right]}{2\gamma_t(1-\eta_{1,t})} \\
		\leq & \left[\frac{^2_i\sqrt{\hat{v}_i^{(T)}}}{2\eps_T(1-\eta_{1,1})} - \frac{D^2_i\sqrt{\hat{v}_i^{(1)}}}{2\eps_1(1-\eta_{1,1})}\right] + \frac{D_i^2 \sqrt{\hat{v}_i^{(1)}}}{2\eps_T(1-\eta_{1,1})} \\
		\leq & \frac{^2_i\sqrt{\hat{v}_i^{(T)}}}{2\eps_T(1-\eta_{1,1})}.
	\end{align}
	We focus on $\hat{v}_i^{(T)}$ for further scaling. As mentioned earlier, $v_i^{(t)} = (1 - \eta_2)\sum_{s=1}^{t}\eta_{2,t-s}g^2_{s,i}$, so we can explore its boundedness based on the earlier gradient bounded assumption:
	\begin{align}
		& \left.\begin{array}{cc}
			v_i^{(t)} & \leq \\
			\hat{v}_i^{(t)} & = \\
		\end{array}\right\rbrace \frac{v_i^{(t)}}{1-\eta_{2}^{t-s}} \leq \frac{(1-\eta_2)\sum_{s=1}^{t}\eta_{2}^{t-s}G_i^2}{1-\eta_{2,t}} \\
		& = \frac{G_i^2(1-\eta_{2}^{t-s})}{1-\eta_{2,t}} = G_i^2.
	\end{align}
	So, finally, $(1)$ is scaled to:
	\begin{align}
		& \sum_{t=1}^{T} \frac{\sqrt{\hat{v}_i^{(t)}}\left[\left( z_i^{(t)}  - z_i^*\right)^2 - \left(z_i^{(t+1)} - z_i^*\right)^2\right]}{2\gamma_t(1-\eta_{1,t})} \\
	\leq& \frac{D^2_i\sqrt{\hat{v}_i^{(T)}}}{2\eps_T(1-\eta_{1,1})} \leq \frac{D_i^2 G_i}{2\eps_T(1-\eta_{1,1})}.
	\end{align}
	Regarding $(2)$, we first apply the variable bounded assumption:
	\begin{align}
		& \sum_{t=1}^{T}\frac{- \eta_{1,t}}{1-\eta_{1,t}}\left(z^{(t)}_i - z^*_i\right) \\
	   =& \sum_{t=1}^T\frac{\eta_{1,t}}{1-\eta_{1,t}}m^{(t-1)}_i \left[-\left(z^{(t)}_i - z^*_i\right)\right] \\
	\leq&  \sum_{t=1}^T\frac{\eta_{1,t}}{1-\eta_{1,t}}m^{(t-1)}_i \vert m_i^{(t-1)} \vert D_i.
	\end{align}
	Now, we focuses on $m_i^{(t-1)}$:
	\begin{align}
		m_i^{(t)} 
		=& \eta_{1,t} m_i^{(t-1)} + (1- \eta_{1,t}) g_{t,i} \\
		=& \eta_{1,t} \eta_{1,t-1} m_i^{(t-2)} + \eta_{1,t}(1-\eta_{1,t-1}) g_{t-1,i} + \nonumber \\ 
		&(1 - \eta_{1,t})g_{t,i} \\
		=& \eta_{1,t} \eta_{1,t-1} \eta_{1,t-2} m_i^{(t-3)} + \eta_{1,t}\eta_{1,t-1}(1-\eta_{1,t-2})g_{t-2,i} \nonumber \\
		& + \eta_{1,t}(1-\eta_{1,t-1}) g_{t-1,i} + (1 - \eta_{1,t})g_{t,i} \\
		=& \eta_{1,t} \eta_{1,t-1} \cdots \eta_{1,1} m_i^{(0)}+\eta_{1,t}\eta_{1,t-1}\cdots(1-\eta_{1,1})g_{1,i} + \nonumber \\
		&\cdots+\eta_{1,t}\eta_{1,t-1}(1-\eta_{1,t-2})g_{t-2,i}+ \nonumber \\ & \eta_{1,t}(1-\eta_{1,t-1}) g_{t-1,i} + (1 - \eta_{1,t})g_{t,i} \\
		\overset{\text{(a)}}{=}& \eta_{1,t}\eta_{1,t-1}\cdots(1-\eta_{1,1})g_{1,i} +\cdots+ \nonumber \\
		& \eta_{1,t}\eta_{1,t-1}(1-\eta_{1,t-2})g_{t-2,i}+ \nonumber \\ 
		& \eta_{1,t}(1-\eta_{1,t-1}) g_{t-1,i} + (1 - \eta_{1,t})g_{t,i} \\
		=& \sum_{s=1}^t (1 - \eta_{1,s}) \left(\prod_{k=s+1}^t \eta_{1,k}\right) g_{s,i}.
	\end{align}
	Where (a) holds because $m_i ^{(0)}=0$. Here, we apply the gradient bounded assumption, for any $t$ we have:
	\begin{align}
		\vert m_i^{(t)} \vert & \leq \sum_{s=1}^t (1-\eta_{1,s}) \left(\prod_{k=s+1}^t \eta_{1,k} \right) \vert g_{s,i} \vert \\
		& \leq \sum_{s=1}^t (1-\eta_{1,s}) \left(\prod_{k=s+1}^t \eta_{1,k} \right) G_i \\
		& = G_i \left(1 - \sum_{s=1}^t \eta_{1,s} \right) \leq G_i.
	\end{align}
	This way, we can scale equation $(2)$:
	\begin{align}
		& \sum_{t=1}^T \frac{-\eta_{1,t}}{1 - \eta_{1,t}} m_i^{(t-1)} \left(z_i^{(t)} - z_i^*\right) \\
   \leq & \sum_{t=1}^T \frac{\eta_{1,t}}{1 - \eta_{1,t}} m_i^{(t-1)} G_i D_i \\
      = & G_i D_i \sum_{t=1}^{T} \frac{\eta_{1,t}}{1-\eta_{1,t}}.
	\end{align}
	About $(3)$, we mainly focus on $\frac{\left(m_i^{(t)}\right)^2}{\sqrt{\hat{v}_i^{(t)}}} = \sqrt{1 - \eta_2^t} \frac{\left(m_i^{(t)}\right)^2}{\sqrt{\hat{v}_i^{(t)}}} \leq \frac{\left(m_i^{(t)}\right)^2}{\sqrt{\hat{v}_i^{(t)}}}$, we have:
	\begin{align}
		m_i^{(t)} &= \sum_{s=1}^t (1 - \eta_{1,s}) \left(\prod_{k=s+1}^t \eta_{1,k} \right) g_{s,i}, \\
		v_i^{(t)} &= (1 - \eta_2) \sum_{s=1}^{t} \eta_2^{t-s}.
	\end{align}
	Then, we transform $\left(m_i^{(t)}\right)^2$ to:
	\begin{align}
		\left(m_i^{(t)}\right)^2 =& \left(\sum_{s=1}^{t} \frac{(1-\eta_{1,s})\left(\prod_{k=s+1}^t\eta_{1,k}\right)}{\sqrt{(1-\eta_2) \eta_2^{t-s}}} \cdot \sqrt{(1-\eta_2)\eta_2^{t-s}}g_{s,i} \right)^2 \\
		=& \sum_{s=1}^t \left(\frac{(1-\eta_{1,s})\left(\prod_{k=s+1}^t\eta_{1,k}\right)}{\sqrt{(1-\eta_2) \eta_2^{t-s}}}\right)^2 \nonumber \\
		& \cdot \sum_{s=1}^t \left( \sqrt{(1-\eta_2)\eta_2^{t-s}}g_{s,i} \right)^2 \\
		=& \sum_{s=1}^t \frac{(1-\eta_{1,s})^2\left(\prod_{k=s+1}^t\eta_{1,k}\right)^2}{(1-\eta_2) \eta_2^{t-s}} \cdot \sum_{s=1}^t (1-\eta_2)\eta_2^{t-s}g_{s,i}^2.
	\end{align}
	So that, we can derive $(3)$ to:
	\begin{align}
		& \sum_{t=1}^T \frac{\gamma_t}{2(1-\eta_{1,t})} \frac{\left(m_i^{(t)}\right)^2}{\sqrt{\hat{v}_i^{(t)}}} \leq \sum_{t=1}^T \frac{\gamma_t}{2(1-\eta_{1,t})} \cdot \\
		&\sum_{s=1}^t \frac{(1-\eta_{1,s})^2\left(\prod_{k=s+1}^{t} \eta_{1,k} \right)^2}{(1-\eta_2)\eta_2^{t-s}} \cdot \frac{\sum_{s=1}^t (1-\eta_2)\eta_2^{t-s}g_{s,i}^2}{\sqrt{\sum_{s=1}^t (1-\eta_2)\eta_2^{t-s}g_{s,i}^2}} \\
	   =&\sum_{t=1}^{T} \frac{\gamma_t}{2(1-\eta_{1,t})} \sum_{s=1}^t \frac{(1-\eta_{1,s})^2\left(\prod_{k=s+1}^t\eta_{1,k}\right)^2}{(1-\eta_2)\eta_2^{t-s}}\sqrt{v_i^{(t)}} \\
	\leq&\sum_{t=1}^T \frac{\gamma_t}{2(1-\eta_{1,t})} \sum_{s=1}^t \frac{(1-\eta_{1,s})^2\left(\prod_{k=s+1}^t\eta_{1,k}\right)^2}{(1-\eta_2)\eta_2^{t-s}} \cdot G_i.
	\end{align}
	Finally, we have the upper bound of $R(T)$:
	\begin{align}
		R(T) & \leq \frac{\sum_{i=1}^d D_i^2 G_i}{2\eps_T(1-\eta_{1,1})} + \sum_{i=1}^{d} G_i D_i \sum_{t=1}^T \frac{\eta_{1,t}}{1-\eta_{1,t}} \nonumber \\
		& + \sum_{i=1}^{d} G_i \sum_{t=1}^T \frac{\gamma_t}{2(1-\eta_{1,t})} \cdot \nonumber \\
		& \sum_{s=1}^{t} \frac{(1-\eta_{1,s})^2\left(\prod_{k=s+1}^t \eta_{1,k}\right)}{(1-\eta_2)\eta_2^{t-s}} \\
		& = \frac{\sum_{i=1}^{d}D^2_i G_i}{2\eps_T(1-\eta_{1,1})} + \left(\sum_{i=1}^{d}D_iG_i\right)\left(\sum_{t=1}^{T}\frac{\eta_{1,t}}{1-\eta_{1,t}}\right) + \nonumber \\
		& \left(\sum_{i=1}^{d}G_i\right)\left[\sum_{t=1}^{T}\frac{\gamma_t}{2(1-\eta_{1,t})} \cdot \right. \nonumber \\
		& \left. \sum_{s=1}^{t}\frac{(1-\eta_{1,s})^2\left(\prod_{r=s+1}^t\eta_{1,r}\right)^2}{(1-\eta_2)\eta_2^{t-s}}\right],
	\end{align}
	Clearly, when $T \to \infty$, the above equation is tend to $0$:
	\begin{align}
		\lim_{T \to \infty} \frac{R(T)}{T} & \leq \lim_{T \to \infty} \frac{1}{T} \left[ \frac{\sum_{i=1}^{d}D^2_i G_i}{2\eps_T(1-\eta_{1,1})} \right. + \nonumber \\
		&\left(\sum_{i=1}^{d}D_iG_i\right)\left(\sum_{t=1}^{T}\frac{\eta_{1,t}}{1-\eta_{1,t}}\right) + \nonumber \\
		&\left(\sum_{i=1}^{d}G_i\right)\left[\sum_{t=1}^{T}\frac{\gamma_t}{2(1-\eta_{1,t})} \cdot \right. \nonumber \\
		& \left. \sum_{s=1}^{t}\frac{(1-\eta_{1,s})^2\left(\prod_{r=s+1}^t\eta_{1,r}\right)^2}{(1-\eta_2)\eta_2^{t-s}}\right] = 0.
	\end{align}
	 Thus, we can conclude that the Improved-Momentum-variant history gradient update algorithm is tend to convergence. Considering $\eta_{1,t} \in (0, 1), \forall t, \eta_{1,1} \geq \eta_{1,2} \geq \dots \geq \eta_{1,T} \geq \dots$ and $\eta_2 \in (0,1), \frac{\eta_{1,t}}{\sqrt{\eta_2}}\leq\sqrt{c} < 1$, about $(2)$, we have:
	 \begin{align}
	 	& \left(\sum_{i=1}^{d}D_iG_i\right)\left(\sum_{t=1}^{T}\frac{\eta_{1,t}}{1-\eta_{1,t}}\right) \nonumber \\
	 \leq&
	     \left(\sum_{i=1}^{d}D_iG_i\right)\left(\frac{1}{1-\eta_{1,t}}\sum_{t=1}^{T}\eta_{1,t}\right).
	 \end{align}
	 About $(3)$, we have:
	 \begin{align}
	 	& \left(\sum_{i=1}^{d}G_i\right)\left[\sum_{t=1}^{T}\frac{\gamma_t}{2(1-\eta_{1,t})} \cdot \sum_{s=1}^{t}\frac{(1-\eta_{1,s})^2\left(\prod_{r=s+1}^t\eta_{1,r}\right)^2}{(1-\eta_2)\eta_2^{t-s}}\right] \\
	   =& \left(\sum_{i=1}^{d}G_i\right) \cdot \sum_{t=1}^T \frac{\eps_t}{2(1-\eta_{1,t})\left(1-\prod_{s=1}^t\eta_{1,s}\right)} \cdot \nonumber \\
	    &\sum_{s=1}^t \left(\frac{1-\eta_{1,s}}{\sqrt{1-\eta_2}}\right)^2 \cdot \prod_{k=s+1}^t\left(\frac{\eta_{1,k}}{\sqrt{\eta_2}}\right) \\
	   \overset{\text{(a)}}{\leq}& \left(\sum_{i=1}^{d}G_i\right) \cdot \sum_{t=1}^T \frac{\eps_t}{2(1-\eta_{1,t})^2 (1-\eta_2)} \sum_{s=1}^t \prod_{k=s+1}^t c \\
	   =& \left(\sum_{i=1}^{d}G_i\right) \cdot \sum_{t=1}^T \frac{\eps_t}{2(1-\eta_{1,t})^2(1-\eta_2)(1-c)} \\
	   =& \left(\sum_{i=1}^{d}G_i\right) \cdot \frac{  \sum_{t=1}^T \eps_t}{2(1-\eta_{1,t})^2(1-\eta_2)(1-c)}.
	 \end{align}
	Where (a) holds because the assumption $\frac{\eta_{1,t}}{\sqrt{\eta_2}}\leq\sqrt{c} < 1$. 	Similar to Theorem~\ref{proofofgd}, when $n=1/2$, $R(T)$ attain the optimal upper bound $\gO\left(T^{1/2}\right)$.
\end{proof}

\section*{Experimental Details}
\label{app:experiment}
\subsection{Model Details}
Here, we present the parameters utilized in our experiments for each dataset.
\begin{table*}[t]
	\centering
	\caption{Model hyperparameters for the unconditional DDPM and LDMs used in the AAPM CT evaluation experiments. All models are trained on a single RTX 4090.}
	\label{tab:hyperparameters}
	\setlength{\tabcolsep}{18pt}
	\begin{tabular}{@{}lccc@{}}
			\toprule
			& \textbf{DDPM} $256\time256$   & \textbf{LDM} $256\time256$    & \textbf{LDM}  $512\time512$      \\ \midrule
			Latent shape             & -       & $32 \times 32 \times 4$     & $32 \times 32 \times 4$         \\
			Input shape              & $256 \times 256$     & $32 \times 32$      & $32 \times 32$         \\
			Diffusion steps          & 1000    & 1000    & 1000       \\
			Noise schedule           & linear  & linear  & linear     \\
			U-Net Channels           & 128     & 128     & 128        \\
			U-Net Channel Multiplier & 1,2,2,4 & 1,2,2,4 & 1,2,2,4    \\
			U-Net Depth              & 2       & 2       & 2          \\
			U-Net Batch size         & 4       & 32      & 32         \\
			U-Net Epochs             & 500     & 500     & 500        \\
			U-Net Learning Rate      & 1e-4    & 1e-4    & 1e-4       \\
			VQVAE Channels           & -       & 128     & 32         \\
			VQVAE Channel Multiplier & -       & 1,1,2,4 & 1,4,8,8,16 \\
			VQ Dimension             & -       & 4       & 4          \\
			Numbers of VQ embedding  & -       & 16384   & 16384      \\
			VQVAE Batch size         & -       & 4       & 4          \\
			VQVAE Epochs             & -       & 251     & 251        \\
			VQVAE Learning rate      & -       & 4.5e-5  & 4.5e-5     \\ \bottomrule
		\end{tabular}
\end{table*}

\begin{itemize}
	\item \textbf{CelebaHQ. } This pretrained DDPM model can be accessed from the \href{https://huggingface.co/google/ddpm-ema-celebahq-256}{\textcolor{blue}{huggingface model zoo}}. The pretrained LDM model can be accessed from the \href{https://huggingface.co/CompVis/ldm-celebahq-256}{\textcolor{blue}{huggingface model zoo}}.
	\item \textbf{LSUN Bedroom. } This pretrained DDPM model can be accessed from the \href{https://huggingface.co/google/ddpm-ema-bedroom-256}{\textcolor{blue}{huggingface model zoo}}. The pretrained LDM model can be accessed from the \href{https://github.com/CompVis/latent-diffusion#pretrained-ldms}{\textcolor{blue}{Github repository}}.
	\item \textbf{AAPM CT. } Both the DDPM and LDM models are trained from scratch using the AAPM CT dataset. The hyperparameters employed for the models can be found in Table~\ref{tab:hyperparameters}.
\end{itemize}
All models were trained using a single RTX 4090 GPU, with the training process taking approximately 1 day, 2 days, and 5 days for each respective model. The reconstruction process running on a single RTX 4090 GPU.

\subsection{Details of the measurement operators}
\textbf{CT Reconstruction. } The measurement operator of CT reconstruction can be defined as:
\begin{align}
	\y &= \Ac\x + \n \\
	\y &= \mathbf{P}(\mathbf{\Lambda})\x + \n,
\end{align}
here, we define $\mathbf{P}$ as the discretized Radon transform, which is utilized in CT to generate Sinogram data. Additionally, $\mathbf{\Lambda}$ represents the selection mask matrix, determining the chosen views for measurement. Consequently, the disparity between sparse-view and limited-angle CT reconstruction lies solely in the variation of $\mathbf{\Lambda}$. Throughout our experiments, we set the noise variable $\n$ to 0.

\textbf{Random Inpainting. } The measurement operator of random inpainting can be defined as:
\begin{align}
	\y &= \Ac \x + \n \\
	\y &= \mathbf{M} \circ \x + \n , \n \sim \Nc(\bm{0}, \Ib),
\end{align}
where we define $\mathbf{M}$ as a random masking matrix with the same shape as $\x$ and comprising elementary unit vectors. The symbol $\circ$ represents the Hadamard product. In our experiments, we set $\sigma$ to a value of 0.05.

\textbf{Super Resolution. } The measurement operator of super-resolution can be defined as:
\begin{align}
	\y &= \Ac \x + \n \\
	\y &= \mathbf{H} \x + \n , \n \sim \Nc(\bm{0}, \Ib),
\end{align}
where we denote $\mathbf{H}$ as the bilinear downsampling catalecticant matrix. In our experiments, we set the value of $\sigma$ to 0.05.

The hyperparameters used for the LHGU guiding process with different measurement operators are presented in Table~\ref{tab:exp_params}.

\begin{table*}[t]
	\centering
	\caption{Guidance hyperparameters of LHGU for AAPM CT and nature image experiments.}
	\label{tab:exp_params}
		\begin{tabular}{@{}lcccc@{}}
				\toprule
				& \multicolumn{1}{c} {\textbf{Sparse view CT}} & \multicolumn{1}{c}{\textbf{Limited angle CT}} & \multicolumn{1}{c}{\textbf{Inpainting}} & \multicolumn{1}{c}{\textbf{Super-Resolution}} \\ \midrule
				Evaluation function $\mathcal{U}$           & $L1$                                          & $L2$                                            & $L2$                                      & $L2$                                            \\
				Guidance rate $\epsilon$                  & 0.5                                         & 0.1                                           & 0.05                                    & 0.001                                         \\
				History gradient update & Improved-Momentum-variant                                   & Improved-Momentum-variant                                     & Improved-Momentum-variant                               & Improved-Momentum-variant                                     \\ \bottomrule
			\end{tabular}
\end{table*}

\end{document}